\theoremstyle{plain}
\newtheorem{theorem}{Theorem}[section]
\newtheorem{proposition}[theorem]{Proposition}
\newtheorem{corollary}[theorem]{Corollary}
\theoremstyle{definition}
\newtheorem{definition}[theorem]{Definition}
\theoremstyle{remark}
\icmltitlerunning{Submission and Formatting Instructions for ICML 2025}
\newcommand{\ours}{\textsc{NeuroTree}}
\begin{document}

\twocolumn[

\icmltitle{\ours: Hierarchical Functional Brain Pathway Decoding for Mental Health Disorders}


\icmlsetsymbol{equal}{*}

\begin{icmlauthorlist}
\icmlauthor{Jun-En Ding}{stevens,stevens2}
\icmlauthor{Dongsheng Luo}{fiu}
\icmlauthor{Chenwei Wu}{Michigan}
\icmlauthor{Anna Zilverstand}{UMN}
\icmlauthor{Kaustubh Kulkarni}{MountSinai}
\icmlauthor{Feng Liu}{stevens,stevens2}

\end{icmlauthorlist}

\icmlaffiliation{stevens}{Department of Systems Engineering, Stevens Institute of Technology, New Jersey, USA}

\icmlaffiliation{stevens2}{Semcer Center for Healthcare Innovation, Stevens Institute of Technology, New Jersey, USA}
\icmlaffiliation{fiu}{Department of Computing and Information Sciences, Florida International University, USA}
\icmlaffiliation{Michigan}{Department of Electrical Engineering and Computer Science, University of Michigan, USA}

\icmlaffiliation{UMN}{Department of Psychiatry and Behavioral Sciences, University of Minnesota, USA}

\icmlaffiliation{MountSinai}{Computational Psychiatry, School of Medicine at Mount Sinai, USA}

\icmlcorrespondingauthor{Feng Liu}{fliu22@stevens.edu}

\icmlkeywords{Machine Learning, ICML}

\vskip 0.1in
]



\printAffiliationsAndNotice{} 
\raggedbottom
\begin{abstract}

Mental disorders are among the most widespread diseases globally. Analyzing functional brain networks through functional magnetic resonance imaging (fMRI) is crucial for understanding mental disorder behaviors. Although existing fMRI-based graph neural networks (GNNs) have demonstrated significant potential in brain network feature extraction, they often fail to characterize complex relationships between brain regions and demographic information in mental disorders. To overcome these limitations, we propose a learnable NeuroTree framework that integrates a $k$-hop AGE-GCN with neural ordinary differential equations (ODEs) and contrastive masked functional connectivity (CMFC) to enhance similarities and dissimilarities of brain region distance. Furthermore, NeuroTree effectively decodes fMRI network features into tree structures, which improves the capture of high-order brain regional pathway features and enables the identification of hierarchical neural behavioral patterns essential for understanding disease-related brain subnetworks. Our empirical evaluations demonstrate that NeuroTree achieves state-of-the-art performance across two distinct mental disorder datasets. It provides valuable insights into age-related deterioration patterns, elucidating their underlying neural mechanisms.  The code and datasets are available at \href{https://github.com/Ding1119/NeuroTree}{https://github.com/Ding1119/NeuroTree}.

\end{abstract}

\begin{figure}[h]
\centering
\includegraphics[width=0.5\textwidth]{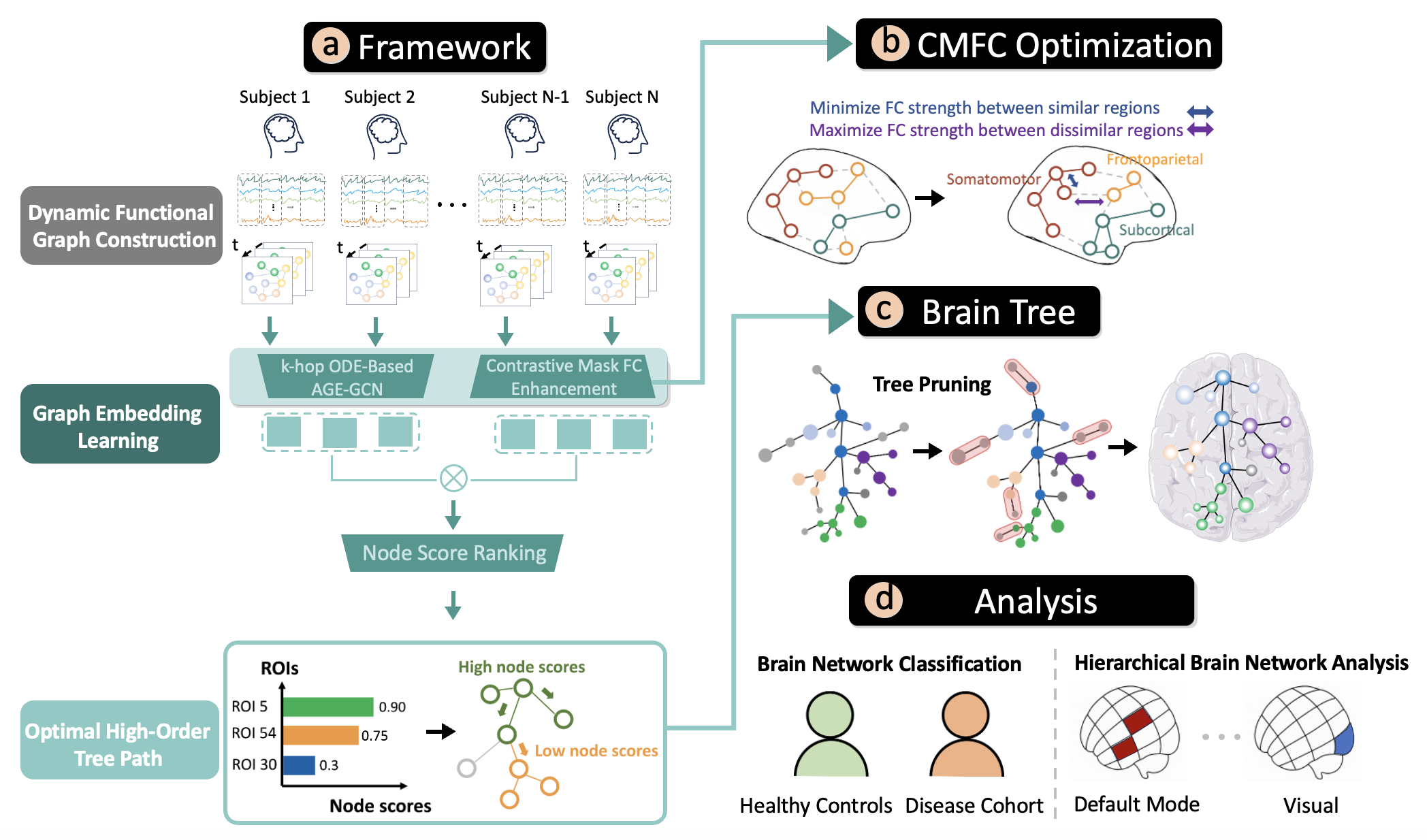}
\caption{Overview of {\ours} framework.
}
\label{fig:fusion_method}
\end{figure}

\section{Introduction}
\label{submission}



In recent years, neuropsychiatric disorders and addiction have emerged as substantial public health challenges, with their impact on brain function becoming a critical research focus in neuroscience~\cite{sahakian2015impact,hollander2020beyond,borumandnia2022trend}. Substance use disorder (SUD) is widely studied in the field of mental disorders, enabling the investigation of structural brain alterations among SUD users across various factors (e.g., age or gender)~\cite{niklason2022explainable,ding2024spatial}. The development of functional magnetic resonance imaging (fMRI), which measures blood oxygen level-dependent (BOLD) signals, has provided a technique to analyze patterns of brain network activity in mental disorders~\cite{zhang2016neural,li2022resting,niklason2022explainable}. 

Moreover, research on the clustering of connectivity in brain regions using fMRI has demonstrated that the brain does not operate statically but instead exhibits a dynamic, multi-level hierarchical organization~\cite{meunier2009hierarchical,betzel2023hierarchical}. In particular, hierarchical network analysis has been shown to effectively distinguish these abnormal patterns and identify brain regions associated with specific pathological mechanisms in psychiatric disorders~\cite{boisvert2024patterns,segal2023regional}.

Currently, traditional brain network analysis has successfully represented regions of interest (ROIs) as nodes with functional connectivity (FC) as edges~\cite{kulkarni2023interpretable,xu2024abnormal,xu2016network}. Specifically, research on graph neural networks (GNNs) has shown promising results, demonstrating strong predictive capabilities for neuropsychiatric disorders by learning patterns across brain networks~\cite{tong2023fmri,kazi2019inceptiongcn}.  

Differential equations can capture the dynamic processes underlying changes in brain activity over time. CortexODE further advances these approaches by combining ordinary differential equations (ODEs) with deep learning methods to model MRI images as point trajectories, thereby enabling the reconstruction of complex cortical surfaces~\cite{ma2022cortexode}. Moreover, STE-ODE \cite{tang2024interpretable} combines the advantages of ODEs with continuous-time fMRI and structural MRI to model dynamic brain networks while leveraging GNN architectures for embedding learning. However, these approaches have several limitations. First, although GNNs demonstrate excellent node feature learning capabilities in graph structures, they are generally constrained by limited interpretability—particularly regarding the explicability of auxiliary factors in mental disorders. Second, graph-based networks for human brain FC are restricted to feature learning from dynamic graph sequences, overlooking the brain's hierarchical clustering of network activity. Consequently, there is a lack of effective methodologies for predicting functional brain alterations across different age stages in psychiatric disorders.

To address these limitations, we leverage the properties of ODEs for fMRI modeling similar to STE-ODE and extend continuous-time dynamics methods. In this study, we propose {\ours}, a framework designed to characterize brain network patterns through hierarchical analysis and represent the fMRI network as tree-structured pathways. This approach decodes brain network features and facilitates visualization of disorder propagation through brain pathways. Additionally, we develop a $k$-hop ODE-based AGE-GCN to capture complex interactions between adjacent and distant brain regions and to measure the convergence of various mental disorder subtypes. We incorporate demographic features like age into model learning and message passing to enhance interpretability. Moreover, we implement contrastive masking optimization to identify key FC patterns. Our main contributions are as follows:
\vspace{-2mm}
\begin{itemize}
    \item  We integrate demographic information into Neural ODEs for modeling both static and dynamic brain networks via $k$-hop graph convolution.
    \item We investigate two types of brain disorder datasets to decode fMRI signals, constructing disease-specific brain trees while employing trainable framework to identify hierarchical functional subnetwork regions.
    \item  Our research not only achieves state-of-the-art graph classification performance but also effectively interprets how addiction and schizophrenia disorders cause changes in FC related to brain age.
\end{itemize}

\section{Related Work}

\paragraph{Neural ODEs for Dynamic BOLD fMRI Signal Modeling.}

The BOLD fMRI signal is an indirect and delayed reflection of neural activity. Traditional approaches like DCM use ordinary differential equations (ODEs) to infer connectivity, but they are computationally intensive and rely on prior assumptions
~\cite{cao2019functional}. Recent work has demonstrated the effectiveness of ODEs in modeling continuous-time neural dynamics~\cite{han2024brainode,tang2024interpretable}. The evolution of brain states can be described by the equation $\frac{dX(t)}{dt} = f(X(t), t)$, where $X(t)$ represents the brain state at time $t$, and $f(\cdot)$ is a neural network that learns the rules governing state transitions. This framework provides a natural way to model the continuous nature of brain dynamics while maintaining biological plausibility~\cite{havlicek2015physiologically}.



\vspace{-2mm}
\paragraph{Path-Aware Learning in GNNs.}

GCKN computes node distances using kernel functions along vertex-connected paths~\cite{chen2020convolutional}.  Additionally,  PathNNs enhance the expressive power of GNNs by aggregating information from multiple paths between nodes to improve performance across multiple graph classification and regression tasks and capturing complex graph structures~\cite{michel2023path}.  The HEmoN model~\cite{huang2025identifying} has provided preliminary insights into human emotions through tree path analysis; however, it lacks aggregation of higher-order brain pathways in complex brain region interactions, as well as consideration of different demographic factors between individuals. In this study, we propose a weighted higher-order brain tree pathway framework that enhances pathway representation and effectively explains the predictions of changes in FC networks associated with mental disorders across different age groups.

\vspace{-2mm}
\section{The Proposed Architecture}

The proposed {\ours} framework, illustrated in Fig.~\ref{fig:fusion_method}, constructs dynamic FC graphs from subjects' time-series data. It encodes dynamic graph fMRI features using a $k$-hop ODE-Based AGE-GCN incorporate demographics for graph embedding learning for downstream tasks (e.g., graph classification, brain age prediction). Subsequently, a CMFC optimization is performed to adjust the FC strength between similar and dissimilar brain regions distance. Then, we build the hierarchical brain tree by reranking predicted brain regions of interest (ROIs) scores and high-order tree path weights, followed by tree pruning to create anatomically meaningful hierarchical brain network structures. The resulting brain tree structures are then employed to interpret disease cohorts, where functional networks primarily affecting mental disorders manifest at different tree levels.

\section{Preliminaries}
\noindent \textbf{Problem Formulation and Notations.} In this section, we formulate the classification of mental disorders using both static and sequential segmented instance brain networks constructed from fMRI as a supervised graph classification task for each patient.  Let temporal brain activity measurements from $N$ patients, $X_{i} \in \mathbb{R}^{v \times T}$, represent the complete BOLD fMRI signals with $v$ regions of interest (ROIs) and a total length of $T$ time series. We denote $X_i(t) \in \mathbb{R}^{v \times T^{\prime}}$ is segmented time points with dimension $T^{\prime}$ for the $i$th subject. For each subject, we construct a static or dynamic FC graph $G(\mathcal{V}, \mathcal{E})$, where $\mathcal{V}$ represents the set of vertices (i.e., brain regions) and $\mathcal{E}$ denotes the set of edges (i.e., functional connections). The FC between regions is typically computed using Pearson correlation methods. Based on these temporal correlation patterns, we derive both a static adjacency matrix $A^{s} \in \mathbb{R}^{v \times v}$ from the complete time sequence and time-varying dynamic FC matrices $A^{d}(t) \in \mathbb{R}^{v \times v}$. The static matrices capture averaged temporal dependencies, while the dynamic matrices preserve the temporal evolution of neural interactions~\cite{zhang2016neural,li2022resting}.

\subsection{Graph Convolutional Networks}\label{sec:GCN}

Traditional GCNs operate through neighborhood aggregation, whereby each node's representation is updated by aggregating features from its adjacent nodes. The classical GCN layer is formulated as  
\begin{equation}
    H^{(l)} = \sigma(D^{-\frac{1}{2}}AD^{-\frac{1}{2}}H^{(l-1)}W^{(l-1)}).
\end{equation} where $A$ is the adjacency matrix, $D$ is the degree matrix, $H^{(l)}$ represents the node features at layer $l$, and $W^{(l)}$ contains learnable parameters. However, this formulation is limited to first-order neighborhoods and may not effectively capture the complex interactions among brain region~\cite{velivckovic2017graph}.

\subsection{$K$-hop ODE-Based Graph Embedding}

The ODE effectively captures the long-term dependencies of dynamic changes in effective connectivity between different ROIs~\cite{sanchez2019estimating}. Inspired by existing studies \cite{cao2019functional,wen2024biophysics,tang2024interpretable} that utilize ODEs to model BOLD signals or biomarkers in Alzheimer's disease (AD) research, we extend the ODE modeling framework by incorporating a scalar parameter, $\theta \in \mathbb{R}^+$ (i.e., age) , representing the chronological age of subjects. This enhancement aims to improve the prediction and interpretability of mental disorders in fMRI analysis. 

\begin{equation}
    \dot{X} = f(X(t),u(t), \theta)
\end{equation}

where $u(t)$ denotes the external experimental stimuli. We consider the influence of an external parameter, $\theta$, on FC across the entire temporal time series $X(t)$, which is represented by adjacency matrix $A^{d}$. The following ODE models this relationship:

\vspace{-2mm}
\begin{equation}\label{eq:eq2}
\frac{d X(t)}{d t} = \eta A^{d}(t) X(t) + \rho \cdot \theta X(t)  + Cu(t),
\end{equation}

where the matrix \( C \in \mathbb{R}^{v \times v} \) encodes the reception of external stimuli \( u(t) \in \mathbb{R}^{v} \) and parameters $\eta$, $\rho \in \mathbb{R}$. To simplify the analysis, we assume that \( Cu(t) = 0 \), indicating the absence of additional external stimuli due to resting-state fMRI task.
By discretizing Eq. \ref{eq:eq2}, we can derive the effective connectivity matrix $A^{d}(t)$. Its discrete representation is given by:
\begin{align}
A^{d}(t) &= \frac{1}{\eta} \left(\frac{d X(t)}{d t} \frac{1}{X(t)} - \rho \cdot  \theta\right)   \\
&\overset{\Delta t = 1}{\approx} \varphi \left( \frac{X(t+1) - X(t)}{X(t)} - \rho \cdot \theta \right)
\end{align}
where $\varphi=\frac{1}{\eta}$ and  $\rho$ are scale factors ranging from 0 to 1. To address the limitations of first-order feature convolution filters in traditional GCNs, we propose an extended spatial feature filter that operates over a $k$-hop connectivity operator. This approach enables the spatial $k$-order brain network representation to be learned via a generalized graph convolution, as defined in definition~\ref{def:3.1}.

\begin{definition}[$k$-hop graph convolution]\label{def:3.1}
The spatial $k$-order brain network representation can be learned through a generalized graph convolution as follows:

\begin{equation}\label{eq:k_hop_GCN}
\begin{aligned}
H^{(l+1)}(t) &= \sigma\Big(\sum_{k=0}^{K-1} \Phi_k(t)H^{(l)}(t)W_k^{(l)}\Big) 
\end{aligned}
\end{equation}
\end{definition}
where $\sigma$ denotes the ReLU function, and $\Phi_k(t) = \hat{D}^{-\frac{1}{2}}\hat{A}_k(t)\hat{D}^{-\frac{1}{2}}$ represents the normalized $k$-hop connectivity adjacency operator. The adjacency operator can be defined as:
\begin{equation}\label{eq:equation7}
\hat{A}_k(t) = \Gamma  \odot A^s \odot  \underbrace{[\lambda A^d(t) + (1-\lambda)(A^d(t))^T]^k}_{\textbf{k-hop connectivity}}.
\end{equation}
where $\Gamma $ is a learnable parameter, the $k$-hop connectivity operator retains directional information, as evidenced by the fact that $\hat{A}_k(t) \neq (\hat{A}_k(t))^T$ when $\lambda \neq \frac{1}{2}$. This result is derived using the binomial theorem, as detailed in Appendix \ref{appendix_A}.  Intuitively, Eq. \ref{eq:equation7} considers that $A^{s}$ provides connectivity information for the entire static fMRI sequence, while $A^{d}$ can regulate the bidirectional combination of dynamic fMRI in the brain through the parameter $\lambda$.

\begin{theorem}\label{the:theorem3.2}
Suppose the $l^2$-norm of the $k$-hop connectivity adjacency operator $\|\hat{A}_k(t)\|_{2}$ can be derived from Eq.~\eqref{eq:equation7}. Then, as the $k$-hop approaches infinity ($k \to \infty$ ), the $\|\hat{A}_k(t)\|_{2}$  is bounded by the following inequality:
\begin{equation}\label{eq:bound}
\lim_{k \to \infty} \|\hat{A}_k(t)\|_{2} \leq \|\Gamma\|_{2} \cdot \|A^s\|_{2} \cdot \max(\lambda, 1-\lambda)^k
\end{equation}
\end{theorem}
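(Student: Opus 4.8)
The plan is to peel the operator norm apart in three elementary stages, using only standard facts about the spectral ($\ell^2$) norm: sub-multiplicativity under ordinary matrix products, a sub-multiplicative-type bound under the Hadamard product, and invariance under transposition. Write $M(t) := \lambda A^d(t) + (1-\lambda)\,(A^d(t))^{T}$, so that by Eq.~\eqref{eq:equation7} we have $\hat A_k(t) = \Gamma \odot A^s \odot M(t)^{k}$. First I would remove the two Hadamard factors: for conformable matrices $P,Q$ one has $\|P \odot Q\|_{2} \le \|P\|_{2}\,\|Q\|_{2}$, because $P\odot Q$ is a principal submatrix of the Kronecker product $P\otimes Q$ (whose singular values are the pairwise products of those of $P$ and $Q$) and passing to a submatrix cannot increase the spectral norm. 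Applying this twice gives $\|\hat A_k(t)\|_{2} \le \|\Gamma\|_{2}\,\|A^s\|_{2}\,\|M(t)^{k}\|_{2}$.

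Next I would strip off the power using sub-multiplicativity, $\|M(t)^{k}\|_{2} \le \|M(t)\|_{2}^{\,k}$; if one wishes to retain the directional, non-commuting structure of $A^d$ and $(A^d)^{T}$ one can instead expand $M(t)^k$ over ordered words in the two matrices exactly as in the binomial computation of Appendix~\ref{appendix_A}, but the crude bound already suffices here. It then remains to estimate $\|M(t)\|_{2}$. Using the triangle inequality together with $\|(A^d(t))^{T}\|_{2} = \|A^d(t)\|_{2}$ and the normalization of the dynamic connectivity (so that $\|A^d(t)\|_{2} \le 1$), the convex combination $M(t)$ satisfies $\|M(t)\|_{2} \le \max(\lambda,1-\lambda)$. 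Chaining the three estimates yields $\|\hat A_k(t)\|_{2} \le \|\Gamma\|_{2}\,\|A^s\|_{2}\,\max(\lambda,1-\lambda)^{k}$ for every $k$, and since $\lambda\in(0,1)$ forces $\max(\lambda,1-\lambda)<1$, letting $k\to\infty$ drives the right-hand side, and hence $\|\hat A_k(t)\|_{2}$, to $0$.

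I expect the only non-mechanical step to be the last one: a naive triangle-inequality estimate on a convex combination of two operators of norm at most $1$ gives merely $\|M(t)\|_{2}\le\lambda+(1-\lambda)=1$, which is not enough to force decay. Extracting the sharper constant $\max(\lambda,1-\lambda)$ is precisely where the exact normalization/structural hypothesis on $A^d(t)$ (the symmetric $\hat D^{-1/2}$-type normalization, or the directionality that prevents $A^d$ and $(A^d)^{T}$ from reinforcing along common singular directions) has to be invoked. I would therefore make that hypothesis explicit at the outset; if it turns out to be unavailable, the argument still delivers the weaker but asymptotically sufficient bound $\|\hat A_k(t)\|_{2}\le\|\Gamma\|_{2}\,\|A^s\|_{2}\,\|A^d(t)\|_{2}^{k}$, which also tends to $0$ whenever the dynamic FC is a contraction.
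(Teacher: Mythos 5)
Your proof follows essentially the same route as the paper's: the Hadamard-product norm inequality to peel off $\Gamma$ and $A^s$, submultiplicativity to reduce $\|M(t)^k\|_2$ to $\|M(t)\|_2^k$, and the triangle inequality together with $\|(A^d(t))^T\|_2 = \|A^d(t)\|_2$ to handle the convex combination. The caveat you flag at the end is real and applies equally to the paper's own argument: its bound in Eq.~\eqref{eq:eq31} yields $2\max(\lambda,1-\lambda)\|A^d(t)\|_2$, yet the subsequent $k$-th-power step silently discards the factor $\bigl(2\|A^d(t)\|_2\bigr)^k$, so the stated constant $\max(\lambda,1-\lambda)^k$ only follows under an unstated normalization such as $\|A^d(t)\|_2 \le \tfrac{1}{2}$; making that hypothesis explicit (or falling back on $\|\Gamma\|_2\,\|A^s\|_2\,\|A^d(t)\|_2^{\,k}$, as you suggest) is the honest fix.
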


The detailed proof of Eq. \ref{eq:bound} is provided in Appendix \ref{the:theorem1}. To ensure numerical stability when computing powers of $\hat{A}_k(t)$, we establish the bounded condition: $\|\Phi_k(t)\|_2 \leq 1$, which holds for all values of $k$ and $t$. In proposition \ref{pro:convercence}, we demonstrate that $\Phi_k(t)$ exhibits smooth temporal evolution of the graph structure while maintaining a well-defined upper bound. 


\begin{proposition}[Convergence and Uniqueness]\label{pro:convercence}
Let $\Phi_k(t)$ be the normalized adjacency operator that satisfies the Lipschitz condition~\cite{coddington1955theory} $\|\Phi_k(t_1) - \Phi_k(t_2)\|_2 \leq L\|t_1 - t_2\|_2$ for some constant $L \geq 0$. Then, for a Lipschitz continuous activation function $\sigma$ and bounded weights $W_k^{(l)}$, the $k$-hop ODE-GCN admits a unique solution.
\end{proposition}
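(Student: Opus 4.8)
The plan is to cast the $k$-hop ODE-GCN as an initial value problem of the form $\dot{H}(t) = F(H(t), t)$ where $F(H,t) = -H + \sigma\big(\sum_{k=0}^{K-1}\Phi_k(t) H W_k\big)$ (or whatever continuous-depth residual form the authors intend), and then invoke the classical Picard--Lindel\"of existence-and-uniqueness theorem~\cite{coddington1955theory}. The only real work is to verify the two hypotheses of that theorem: (i) joint continuity of $F$ in $(H,t)$, and (ii) a global (or local) Lipschitz bound on $F$ in the state variable $H$, uniformly on compact $t$-intervals. Continuity in $t$ follows from the assumed Lipschitz continuity of $t \mapsto \Phi_k(t)$ together with continuity of $\sigma$ and the fact that matrix multiplication is continuous; continuity in $H$ is immediate for the same reasons.

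\medskip
\noindent\textbf{Key steps.} First I would fix a compact time interval $[0,T']$ and, for states $H_1,H_2$, bound
\[
\|F(H_1,t) - F(H_2,t)\|_2 \le \|H_1 - H_2\|_2 + \Big\|\sigma\big(\textstyle\sum_k \Phi_k(t)H_1 W_k\big) - \sigma\big(\textstyle\sum_k \Phi_k(t) H_2 W_k\big)\Big\|_2 .
\]
Using the Lipschitz constant $L_\sigma$ of $\sigma$ (ReLU has $L_\sigma = 1$), the triangle inequality over the $K$ terms, submultiplicativity of the operator norm, the weight bound $\|W_k\|_2 \le B_W$, and crucially the operator bound $\|\Phi_k(t)\|_2 \le 1$ established right after Theorem~\ref{the:theorem3.2}, this collapses to
\[
\|F(H_1,t) - F(H_2,t)\|_2 \le \Big(1 + L_\sigma \sum_{k=0}^{K-1}\|\Phi_k(t)\|_2\, \|W_k\|_2\Big)\|H_1 - H_2\|_2 \le (1 + K L_\sigma B_W)\,\|H_1 - H_2\|_2,
\]
so $F$ is globally Lipschitz in $H$ with a constant independent of $t$. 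Second, I would record joint continuity: for fixed $H$, $\|F(H,t_1) - F(H,t_2)\|_2 \le L_\sigma \sum_k \|\Phi_k(t_1) - \Phi_k(t_2)\|_2 \|H\|_2 \|W_k\|_2 \le K L L_\sigma B_W \|H\|_2 \|t_1 - t_2\|_2$ by the Lipschitz hypothesis on $\Phi_k$, and combine with the $H$-Lipschitz bound to get continuity on $[0,T']\times\mathbb{R}^{v\times d}$. Third, apply Picard--Lindel\"of / the Cauchy--Lipschitz theorem to conclude that for the given initial condition $H^{(0)}$ there exists a unique solution $H(t)$ on $[0,T']$; a standard continuation argument extends it to the whole interval since the Lipschitz bound is global. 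I would also remark that the same Lipschitz constant gives, via Gr\"onwall's inequality, an a priori bound $\|H(t)\|_2 \le \|H^{(0)}\|_2 e^{(1+KL_\sigma B_W)t}$, which certifies that the trajectory cannot blow up in finite time and hence the solution is in fact global.

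\medskip
\noindent\textbf{Main obstacle.} The substantive point — and the step most likely to need care — is making the reduction to a standard ODE precise: the discrete layer recursion in Definition~\ref{def:3.1} must be interpreted as (or replaced by) a continuous-depth residual flow, and one must be explicit that "bounded weights" means $\sup_k \|W_k^{(l)}\|_2 < \infty$ and that $\Phi_k(t)$ is not merely Lipschitz but also uniformly norm-bounded (which is exactly what the post-Theorem~\ref{the:theorem3.2} condition $\|\Phi_k(t)\|_2 \le 1$ supplies). Everything after that is a routine triangle-inequality/submultiplicativity computation; the only genuine mathematical input is the classical existence-uniqueness theorem, and the role of the earlier results is simply to furnish the uniform operator-norm control that turns a merely local Lipschitz estimate into a global one.
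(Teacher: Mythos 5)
Your proposal follows essentially the same route as the paper's own proof: both cast the layer dynamics as $\dot H = F(H,t)$, verify the state-Lipschitz bound $\|F(H_1,t)-F(H_2,t)\|_2 \le L_\sigma K M\|H_1-H_2\|_2$ via $\|\Phi_k(t)\|_2\le 1$, submultiplicativity, and the weight bound, use the assumed Lipschitz continuity of $t\mapsto\Phi_k(t)$ for continuity in $t$, and then invoke Picard--Lindel\"of. Your additions (the explicit Gr\"onwall a priori bound and continuation to a global solution, which the paper leaves implicit) are correct refinements rather than a different argument.
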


The proof from Picard's existence theorem can be found in Appendix \ref{Proposition_3.2_proof}.

\begin{theorem}[Discretization of Age-Aware Continuous-Time Graph Convolution~\cite{tang2024interpretable}]\label{theorem:3.3}
Consider a series of brain networks $\{\mathcal{G}(t;\theta)\}_{t=0}^T$ parameterized by an age variable $\theta$. Let $Z(t)$ denote the embedding (or feature representation) of the STE-ODE network at time $t$. The continuous temporal evolution of the brain network can be expressed through the following age-modulated ODE-based graph convolution representation:

\begin{equation}
Z(t + \Delta t) = Z(t) + \int_{t}^{t+\Delta t} F(Z(\tau), \tau, \theta) \, d\tau,
\end{equation}
where $F(\cdot)$ is the graph embedding function that captures the spatiotemporal and age-related information within the network.
\end{theorem}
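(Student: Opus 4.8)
The plan is to observe that the claimed identity is nothing more than the integral (Volterra) form of the ODE that governs the network embedding, so the proof decomposes into three steps: (i) write down the differential equation $Z(t)$ obeys, (ii) certify that its solution is regular enough to integrate, and (iii) apply the fundamental theorem of calculus.

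First I would make explicit the vector field. Following the age-augmented dynamics of Eq.~\eqref{eq:eq2} together with the $k$-hop convolutional layer of Definition~\ref{def:3.1} (Eq.~\eqref{eq:k_hop_GCN}), the embedding evolves as $\dot{Z}(t) = F(Z(t), t, \theta)$, where $F$ bundles the normalized $k$-hop operator $\Phi_k(t)$, the learnable weights $W_k^{(l)}$, the activation $\sigma$, and the age-modulation term $\rho\theta$ into a single map; this is precisely the continuous-time STE-ODE embedding dynamics. Next I would establish regularity: by Proposition~\ref{pro:convercence}, since $\Phi_k(t)$ is Lipschitz in $t$, $\sigma$ is Lipschitz, and the weights are bounded, $F$ is Lipschitz in its state argument and jointly continuous in $(Z,t)$ on $[0,T]$, so Picard's theorem yields a unique solution $Z \in C^1([0,T])$. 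Consequently $\tau \mapsto F(Z(\tau),\tau,\theta)$ is continuous on every subinterval $[t, t+\Delta t] \subseteq [0,T]$ and hence componentwise Riemann integrable; the norm bound $\|\Phi_k(t)\|_2 \le 1$ from Theorem~\ref{the:theorem3.2} additionally keeps the integrand bounded, ruling out blow-up over $[t, t+\Delta t]$.

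Then I would apply the fundamental theorem of calculus componentwise: integrating $\dot{Z}(\tau) = F(Z(\tau),\tau,\theta)$ over $[t, t+\Delta t]$ gives $Z(t+\Delta t) - Z(t) = \int_{t}^{t+\Delta t} F(Z(\tau),\tau,\theta)\,d\tau$, which is exactly the stated representation. To close the loop with the discrete $k$-hop ODE-GCN, I would add a short remark that a first-order (Euler) quadrature of this integral recovers the update $Z(t+\Delta t) \approx Z(t) + \Delta t\, F(Z(t),t,\theta)$, with local truncation error controlled by the Lipschitz constant $L$ of Proposition~\ref{pro:convercence}.

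The main obstacle is genuinely the only non-bookkeeping step: justifying that the composed, nonlinear, time-varying vector field $\tau \mapsto F(Z(\tau),\tau,\theta)$ is continuous (hence integrable), given that $F$ involves $\sigma$ and the evolving operator $\Phi_k(t)$. This is where the Lipschitz/continuity hypotheses of Proposition~\ref{pro:convercence} and the uniform boundedness from Theorem~\ref{the:theorem3.2} do the real work; once continuity of the integrand is in hand, the identity follows immediately from the fundamental theorem of calculus.
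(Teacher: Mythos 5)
Your proposal is correct, but note that the paper itself never proves this statement: Theorem~\ref{theorem:3.3} is imported verbatim from the cited STE-ODE work~\cite{tang2024interpretable}, and the text immediately proceeds to the first-order (Euler) discretization without justifying the integral representation. What you have written is therefore the natural missing argument rather than an alternative to an existing one. Your three steps are sound: the identity is the Volterra (integral) form of $\dot{Z}(\tau)=F(Z(\tau),\tau,\theta)$, the regularity needed to integrate is exactly what Proposition~\ref{pro:convercence} supplies (Lipschitz continuity of $F$ in the state via $L_\sigma K M$, continuity in $\tau$ via the Lipschitz condition on $\Phi_k$, hence a unique $C^1$ solution by Picard), and the fundamental theorem of calculus applied componentwise closes the argument. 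Your concluding remark that Euler quadrature of the integral recovers the paper's discrete update $Z(t+\Delta t)\approx Z(t)+\Delta t\,F(Z(t),t,\theta)$ correctly connects the theorem to the only use the paper actually makes of it. One small attribution slip: the bound $\|\Phi_k(t)\|_2\le 1$ is not a consequence of Theorem~\ref{the:theorem3.2} (which bounds $\|\hat{A}_k(t)\|_2$ by $\|\Gamma\|_2\,\|A^s\|_2\max(\lambda,1-\lambda)^k$); it is a separately imposed normalization condition stated in the paragraph following that theorem. This does not affect the validity of your argument, since boundedness of the integrand already follows from continuity of $\tau\mapsto F(Z(\tau),\tau,\theta)$ on the compact interval $[t,t+\Delta t]$.
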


Specifically, the temporal evolution function is defined as:

\begin{equation}
F(Z(t), t, \theta) = \sum_{l=1}^L H^{(l)}(t,\theta),
\end{equation}
where $H^{(l)}(t, \theta)$ representing the age-modulated layer-wise transformation at time $t$. Using a first-order approximation,  the temporal dynamics described in Eq. \ref{eq:k_hop_GCN} and theorem \ref{theorem:3.3} can be discretized into the following form:
\begin{equation}
\begin{aligned}
Z(t + \Delta t) &\approx Z(t) + \Delta t \cdot F_\theta(Z(t)) \\
&= Z(t) + \Delta t \sum_{l=1}^L \sigma\Big(\sum_{k=0}^{K-1} \Phi_k(t) X_\theta^{(l)}(t) W_k^{(l)}\Big),
\end{aligned}
\end{equation}
Here, $X_{\theta}^{(l)}(t)= X^{(l)}(t) \odot ( \beta \cdot \theta)$ denotes the age-modulated feature matrix with learnable parameter $\beta \in (0, 1)$ for age modulation strength at layer $l$, and $W_k^{(l)}$ represents the learnable weights associated with the $k$-th hop filter weight at layer $l$.

For $t \geq 1$, the temporal age-modulated graph embedding update is expressed as:
\begin{equation}
\begin{aligned}
Z(t+1) &= Z(t) + \text{AGE-GCN}_k(\mathcal{G}^f(t+1), \theta) \\
&= Z(t) + \sigma\Big(\sum_{k=0}^{K-1} \Phi_k(t+1) X_\theta^{(L)}(t+1) W_k^{(L)}\Big)
\end{aligned}
\end{equation}
where $\text{AGE-GCN}_k(\mathcal{G}^f(t+1),\theta)$ denotes the age-modulated $k$-hop graph convolution applied to the graph $\mathcal{G}^f(t+1)$ at the final layer $L$.

\subsection{Optimization of Objective Function}
To enhance the distinction between brain regions, we developed a node-wise contrastive masked functional connectivity (CMFC) loss function that simultaneously minimizes similarities and maximizes dissimilarities. This loss effectively amplifies the distances between intra-FCstrengths. For each node $i$, we calculate the FC strength $C_{i}(t) = \frac{1}{|\mathcal{V}|} \sum_{j=1}^{|\mathcal{V}|} |A^{d}_{j}(t)|$ at timestamp $t$, where $|\mathcal{V}|$ represents the total number of nodes in the network. We then map this FC strength to the latent space using a multilayer perceptron $h_{i}(t) = \text{MLP}(C_{i}(t))$. Subsequently, we generate positive and negative masks for connectivity pairs according to $M^{\pm}_{ij}(t) = \mathbb{I} \left( (C^{d}_i(t) \gtrless \mu^C(t)) \land (C^{d}_j(t) \gtrless \mu^C(t)) \right)$, where $\mu^C(t)$ is the mean value of FC and $\mathbb{I}(\cdot)$ is the indicator function.  The  positive and negative sample sets are then defined as $\mathcal{A}^{\pm}(t) = \{ (i,j) \mid M^{\pm}_{ij}(t) = 1 \}$. Finally, the CMFC loss is expressed as:
\vspace{-1mm}
\begin{equation}
\begin{aligned}
\mathcal{L}_{\text{pos}} &= - \frac{1}{|\mathcal{A}^{+}|} 
\sum_{(i,j) \in \mathcal{A}^{+}} 
\log \left( \frac{\exp(S_{ij}(t))}{\sum_{k \in \mathcal{V}} \exp(S_{ik}(t)) + \epsilon} \right), \\
\mathcal{L}_{\text{neg}} &= - \frac{1}{|\mathcal{A}^{-}|} 
\sum_{(i,j) \in \mathcal{A}^{-}} 
\log \left( 1 - \frac{\exp(S_{ij}(t))}{\sum_{k \in \mathcal{V}} \exp(S_{ik}(t)) + \epsilon} \right).
\end{aligned}
\end{equation}
where $S_{ij}(t) = \frac{ \langle h_i(t), h_j(t) \rangle }{ \| h_i(t) \| \| h_j(t) \| }$ computes the cosine similarity of latent vectors of FC for nodes $i$ and $j$ at time $t$, and  $\epsilon =  10^{-6} $ is added to prevent division by zero. To optimize the model, we minimize the combined CMFC loss $\mathcal{L}_{CMFC}=\mathcal{L}_{\text{pos}}+\mathcal{L}_{\text{neg}}$ and combine it with the binary loss $\mathcal{L}_{b} $ for the overall loss function $\mathcal{L}=\mathcal{L}_{CMFC} + \mathcal{L}_{b}$.

\subsection*{Node Score Predictor}

Different mental disorders exhibit distinct intrinsic FC patterns between various brain regions~\cite{gallo2023functional,fu2021whole}. By analyzing these connectivity differences in specific brain areas, we can predict regional scores to identify similar abnormal functional networks~\cite{li2020pooling}. We leverage FC strength latent vectors $h_{i}$ combined with age-aware $k$-hop graph embeddings parameterized by $\boldsymbol{\Theta} = \{\Lambda, \theta, W_k^{(l)}\}$ to generate predictive scores for individual brain regions. The score for the $i$th node is expressed as follows:
\begin{equation}\label{eq:node_score}
    \mathcal{S}_{i} =  h_{i} \cdot \zeta\Big(\frac{1}{\left| \mathcal{V}\right|} \sum_{j \in \mathcal{V}} Z_{j}(\boldsymbol{\Theta})^\top Z_{i}(\boldsymbol{\Theta})\Big), \quad i \in \{1, 2, \ldots, \left| \mathcal{V}\right|\}
\end{equation}
where $\zeta(\cdot)$ denotes a non-linear transformation function and $Z(\boldsymbol{\Theta}) \in \mathbb{R}^{\mathcal{|\mathcal{V}| } \times d}$ represents the $d$-dimensional node embeddings learned by AGE-GCN with the parameter set $\boldsymbol{\Theta}$.

\begin{definition}[MST algorithm]\label{def:MST}
Given a weighted undirected graph $G^u = (V,E,w^u)$, where $w^u(e_{ij})$ assigns a weight to the edge $e_{ij} \in E$ between vertices $i$ and $j$, we define the pruned weighted graph $\mathcal{T}(G^{u}) = (\mathcal{V},\mathcal{E}_{\mathcal{T}},w^u)$, where $\mathcal{E}_{\mathcal{T}} \subseteq E$, $|\mathcal{E}_{\mathcal{T}}| \ll |E|$ represents the set of edges retained in the minimum spanning tree (MST) after pruning. Our goal is to find the minimum spanning tree while satisfying:
\begin{equation}
\min_{T \in \mathcal{T}(G^u)} \sum_{e_{ij} \in \mathcal{E}_T} w^u(e_{ij}).
\end{equation}
where $T$ is a spanning tree with edges $\mathcal{E}_T \subseteq \mathcal{E}$ and diameter upper bound $\text{diam}(T) \leq \frac{(|\mathcal{V}|-1)(|\mathcal{V}|-2)}{2}$ \cite{spira1975finding}. In practice, we can apply Kruskal's algorithm \cite{kruskal1956shortest} to find the minimum spanning tree.
\end{definition}

\vspace{-3mm}
\section{Hierarchical Brain Tree Construction}

To characterize the hierarchical levels of FC in brain networks associated with mental illness, we construct a tree structure from a weighted undirected FC graph $G^u$ based on predicted node scores and weighted edges (i.e., FC strength), which contributes to identifying the most significant functional pathways. Note that Eq.~\ref{eq:equation7} represents a model trained on directional graphs to investigate complex asymmetric connection patterns for graph embedding. The constructed tree of edge weights $w^u(e_{ij})$ is derived from the model's learnable FC strengths measured for each ROI connectome.  Following Definition \ref{def:MST}, the pruning process ensures that the set $\mathcal{E}_{\mathcal{T}}$ retains only the significant edges that satisfy the optimization criterion.


\subsection{Optimal Weighted Tree Path}\label{sec:optimal_weight_path}
Given the pruned graph \(\mathcal{T}(G)\), we examine paths that integrate node importance and high-order FC across multiple tree depths. Let \(P = \{v_{0}, v_1, \dots, v_k\}\) represent a path of length \(k-1\) in \(\mathcal{T}(G)\), where each node \(v \in P\) is associated with a scalar score \(\mathcal{S}(v; \boldsymbol{\Theta}) \in \mathbb{R}\), computed from Eq.~\ref{eq:node_score}. The edge set of \(P\) is denoted by $E(P) = \{(v_i, v_j) \mid v_i, v_j \in P, (v_i, v_j) \in \mathcal{T}(G)\}$, where each edge \(e_{v_{i}v_{j}} \in E(P)\) represent the path of neighbor connectivity.  The composite path weight is expressed as:
\vspace{-3mm}
\begin{equation}\label{eq:all_path} 
\begin{aligned} 
\mathcal{W}(P) = \alpha \kern-0.3cm & \underbrace{\sum_{v \in P} \mathcal{S}(v; \boldsymbol{\Theta})}_{\textbf{Node Score Contribution}} 
\kern-0.4cm + (1 - \alpha) \kern-0.1cm \underbrace{\sum_{s=1}^{S} \sum_{(v_i,v_j) \in E(P)} \mathcal{F}_{v_{i}v_{j}}^{(s)}}_{\textbf{High-Order FC Contribution}}.
\end{aligned} 
\end{equation}
Here, $\mathcal{F}_{v_{i}v_{j}}^{(s)} \in \mathbb{R}$ represents the \(s\)-th order FC strength between nodes \(v_i\) and \(v_j\) along the tree path. To solve Eq. \ref{eq:all_path}, we identify the path that minimizes \(\mathcal{W}(P)\) by adjusting the parameter $\alpha \in [0,1]$ to balance the contributions of the scoring function \(\mathcal{S}(v;\boldsymbol{\Theta})\) and the connectivity strengths along the weighted path. Additional experimental results can be found in Fig. \ref{fig:tree_weight_path}. According to theorem \ref{theorem:optimal_path} in the Appendix, the optimal path can be determined when $\alpha^{*}$ exist in the interval $[\alpha_{L},\alpha_{U}]$~\cite{xue2000primal}.

\begin{figure}
\centering
\includegraphics[width=0.5\textwidth]{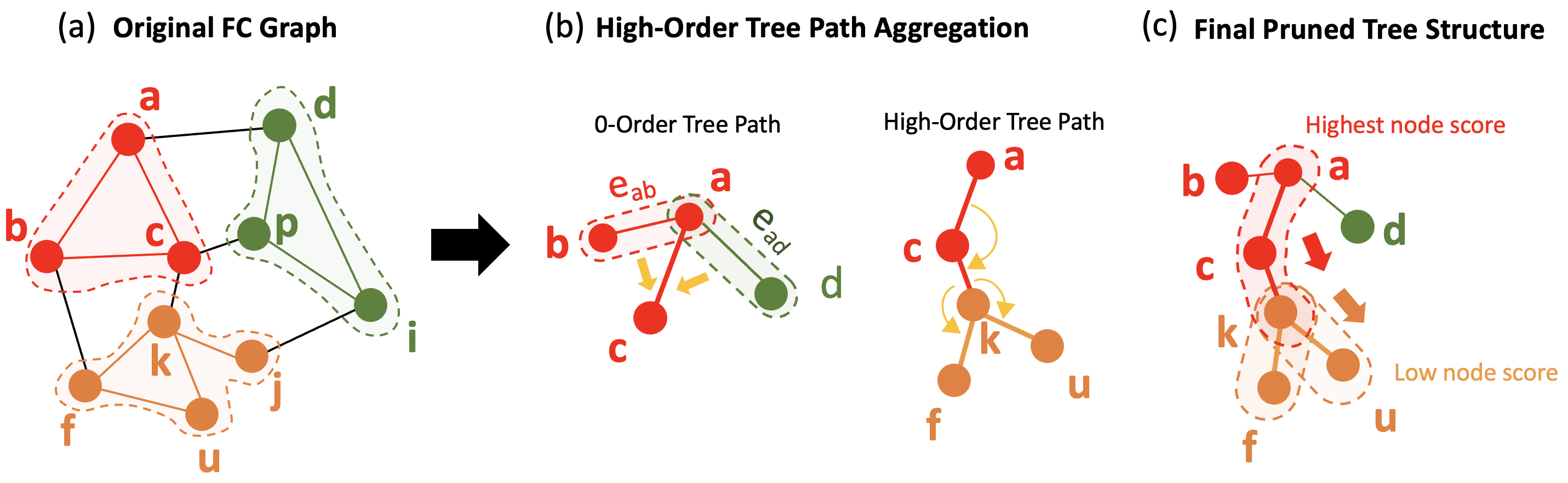}
\caption{\textbf{Aggregation for hierarchical neighborhood paths.} Fig. (a) illustrates the original graph structure with weighted connectivity before pruning, providing subnetworks that represent the differences between brain regions. In Fig. (b), the zero-order path is depicted; here,  aggregation initiates at the highest-scoring node, which integrates its immediate neighborhood by combining edges $e_{ab}$ and $e_{ad}$, i.e., two direct paths connecting the highest-scoring node to its neighbors. Fig. (c) shows the aggregation of higher-order paths, where isolated nodes are connected along the shortest weighted paths, thereby capturing more complex connectivity patterns beyond immediate neighbors.}
\label{fig:high_order_path_tree}
\end{figure}

\begin{algorithm}[h]
\caption{Hierarchical Tree Trunk Extraction with Weighted Tree Path Learning}
\setstretch{0.2}
\begin{algorithmic}[1]
\STATE \textbf{Input:} Node scores $\{\mathcal{S}(v;\boldsymbol{\Theta})\}_{v \in \mathcal{V}}$, pruned graph $\mathcal{T}(G)$ with nodes $\mathcal{V}$ and edges $\mathcal{E}_{\mathcal{T}}$, maximum levels $L_{\mathrm{max}}$, scaling factor $\alpha$.
\STATE \textbf{Output:} Hierarchical trunks $\mathcal{T} = \{\mathcal{T}_1, \mathcal{T}_2, \ldots, \mathcal{T}_{L_{\mathrm{max}}}\}$
\STATE Initialize $\mathcal{T} \leftarrow \emptyset$.
\STATE Let $\mathcal{G}(\mathcal{V}, \mathcal{E}_{\mathcal{T}})$ be the pruned graph using MST.
\FOR{$l \leftarrow 1$ to $L_{\mathrm{max}}$}
    \STATE Identify connected components $\{\mathcal{K}_1, \mathcal{K}_2, \ldots, \mathcal{K}_m\}$ in $\mathcal{G}$.
    \FOR{each component $\mathcal{K}_j$}
        \STATE Find $v_{\mathrm{start}} = \arg\max_{v \in \mathcal{K}_j} \mathcal{S}(v)$.
        \STATE Compute shortest paths $\mathcal{P}(v_{\mathrm{start}}, v)$ for all $v \in \mathcal{K}_j$ with weighted path:
        \[
       \mathcal{W}(P) = \alpha\sum_{v\in P}\mathcal{S}(v;\boldsymbol{\Theta}) + (1-\alpha) \sum_{s=1}^{S}    \kern-0.1cm \sum_{(v_i,v_j) \in E(P)} \kern-0.4cm \mathcal{F}_{v_{i}v_{j}}^{(s)},
        \]
        \STATE Let $v_{\mathrm{end}}$ maximize $\mathcal{W}(P)$, and define $P^* = \mathcal{P}(v_{\mathrm{start}}, v_{\mathrm{end}})$.
        \STATE Append $P^*$ to $\mathcal{T}_l$.
    \ENDFOR
    \STATE Remove edges in $\mathcal{T}_l$ from $\mathcal{G}$ and update the graph.
    \STATE Combine paths: $\mathcal{T} \leftarrow \mathcal{T} \cup \mathcal{T}_l$.
\ENDFOR
\STATE \textbf{Return} $\mathcal{T}$.
\label{fig:tree_algorithm}
\end{algorithmic}
\end{algorithm}


\subsection{High-Order Tree Path Aggregation}

In high-order tree path traversal, our objective is to aggregate information of direct and indirect  about the connectivity of tree paths within the pruned graph \(\mathcal{T}(G)\). In practice, we aggregate connectivity information from neighboring edges at each node, forming a weighted tree path. We define the second term in Eq. \ref{eq:all_path}, which represents the high-order FC contribution for an $s$-order connectivity along the \(p\)-th path, as follows:

\begin{equation}\label{eq:high_order_decomp}
\mathcal{F}_{v_{i}v_{j}}^{(s)} = \mathcal{F}_{v_{i}v_{j}}^{(0)} + \sum_{p \in \mathcal{P}_s(v_{i},v_{j})} P_s(p)
\end{equation}
where \( \mathcal{F}_{v_{i}v_{j}}^{(0)}= \mathcal{F}(v_{i})+\mathcal{F}(v_{j})\) represents the FC of a 0-th order path, and \(\mathcal{P}_s(v_{i},v_{j})\) denotes the set of all possible s-order paths between nodes \(i\) and \(j\) in \(\mathcal{T}(G)\). For each $s$-order path index \(p = (i, k_1, k_2, \ldots, k_s, j)\), where all intermediate nodes are distinct (\(k_m \neq k_n \; \forall m \neq n\)), its contribution is calculated as $P_s(p) = \sum_{v \in p} \mathcal{F}^{(s)}_v$. For a path of order \(s\), its extended connectivity at order \(s+1\)-th through the neighboring node \(k\) can be computed as $\mathcal{F}_{v_{i}v_{j}}^{(s+1)} = \sum_{k \in \mathcal{N}(v_{i},v_{j})} \left(  \mathcal{F}_{v_{i}v_{k}}^{(s)} + \mathcal{F}_{v_{k}v_{j}}^{(s)} \right)$, where \(\mathcal{N}(v_{i},v_{j})\) represents the set of common neighbors between nodes \(i\) and \(j\), and \(\mathcal{F}^{(s)}_{v_{i}v_{k}}\) denotes the $s$-th order path connectivity from node \(i\) to node \(k\). This formulation allows us to capture direct connections and higher-order structural relationships within the brain network, as illustrated in Fig.~\ref{fig:high_order_path_tree}.

\subsection*{Fine-Grained Tree Trunk}

We aim to construct a hierarchical tree structure comprising a trunk and branches distributed across different levels in \(\mathcal{T}(G)\). Our method groups nodes with similar scores into the same hierarchical level integrates the most relevant paths into the $l$-th trunk \(\mathcal{T}_l\) and excludes unrelated paths. For the $l$-th level connected component \(G^* = (V_l, E_l)\), the numbers of nodes and edges decrease monotonically as the level increases—that is, \(|V_l| \leq |V_{l-1}|\) and \(|E_l| \leq |E_{l-1}|\). Following Section \ref{sec:optimal_weight_path}, we identify the shortest path set \(P(v_{start}, v_{end}) = \{v_{start}, v_2, \ldots, v_{end}\}\) that incorporates the weighted tree path and the optimal trunk set \(\mathcal{T^{\prime}} = \{\mathcal{T}_1, \mathcal{T}_2, \ldots, \mathcal{T}_k\}\), as presented in algorithm \ref{fig:tree_algorithm}.

\vspace{-2mm}
\begin{table*}[h]
\caption{Evaluating graph classification performance with five-fold cross-validation. We computed the most competitive baseline for each method. We compared the second-best methods denoted by blue color and calculated the improvement rate, denoted as "Improv. (\%)".}
\centering
\resizebox{\textwidth}{!}{
\begin{tabular}{|l|c|c|c|c|c|c|c|c|}
\hline
 & \multicolumn{4}{c|}{\textbf{Cannabis}} & \multicolumn{4}{c|}{\textbf{COBRE }} \\
 \hline
\textbf{Model} & \textbf{AUC} & \textbf{Acc. } & \textbf{Prec. } & \textbf{Rec. } & \textbf{AUC} & \textbf{Acc. } & \textbf{Prec. } & \textbf{Rec. } \\
\hline
$Pearson$ GCN & $0.67 \scriptstyle{\pm 0.06}$ & $0.55 \scriptstyle{\pm 0.07}$ & $0.59 \scriptstyle{\pm 0.13}$ & $0.55 \scriptstyle{\pm 0.06}$ & $0.54 \scriptstyle{\pm 0.11}$ & $0.55 \scriptstyle{\pm 0.10}$ & $0.61 \scriptstyle{\pm 0.12}$ & $0.55 \scriptstyle{\pm 0.10}$ \\
$k$-NN GCN & $0.64 \scriptstyle{\pm 0.03}$ & $0.62 \scriptstyle{\pm 0.03}$ & $0.63 \scriptstyle{\pm 0.03}$ & $0.63 \scriptstyle{\pm 0.03}$ & $0.66 \scriptstyle{\pm 0.07}$ & $0.62 \scriptstyle{\pm 0.08}$ & $0.63 \scriptstyle{\pm 0.08}$ &  $0.63 \scriptstyle{\pm 0.08}$ \\
GAT~\cite{velivckovic2017graph} & $0.72  \scriptstyle{\pm0.05}$  &  $0.67  \scriptstyle{\pm0.04}$ & $0.70  \scriptstyle{\pm0.06}$ & $0.67  \scriptstyle{\pm0.04}$ & $0.67 \scriptstyle{\pm 0.08}$ & $0.60 \scriptstyle{\pm 0.11}$ & $0.57 \scriptstyle{\pm 0.21}$ & $0.60 \scriptstyle{\pm 0.11}$ \\
BrainGNN~\cite{li2021braingnn} & $0.67 \scriptstyle{\pm 0.13}$ & $0.59 \scriptstyle{\pm 0.16}$ & $0.51 \scriptstyle{\pm 0.28}$& $0.59 \scriptstyle{\pm 0.12}$ & $0.55  \scriptstyle{\pm 0.11}$ & $0.50  \scriptstyle{\pm 0.02}$ & $0.31  \scriptstyle{\pm 0.11}$ & $0.50  \scriptstyle{\pm 0.02}$ \\
BrainUSL~\cite{zhang2023brainusl}  & $0.63 \scriptstyle{\pm 0.11}$ & $0.65 \scriptstyle{\pm 0.06}$ & $0.62 \scriptstyle{\pm 0.13}$ & $ 0.63 \scriptstyle{\pm 0.11}$ & $0.57 \scriptstyle{\pm 0.10}$ & $0.54  \scriptstyle{\pm 0.04}$ & $0.41 \scriptstyle{\pm 0.18}$ & $0.57  \scriptstyle{\pm 0.11}$ \\
BrainGSL~\cite{wen2023graph} & $0.59 \scriptstyle{\pm 0.11}$ &  $0.65 \scriptstyle{\pm 0.02}$ & $0.67 \scriptstyle{\pm 0.17}$ & $0.65 \scriptstyle{\pm 0.02}$ & $0.55 \scriptstyle{\pm 0.12}$ & $0.51 \scriptstyle{\pm 0.04}$ & $0.45 \scriptstyle{\pm 0.11}$ &  $0.51 \scriptstyle{\pm 0.04}$ \\

\hline
MixHop~\cite{abu2019mixhop} & $0.73 \scriptstyle{\pm 0.05}$ & $0.69 \scriptstyle{\pm 0.03}$ & $0.70 \scriptstyle{\pm 0.04}$ & $0.69 \scriptstyle{\pm 0.03}$ & $\textcolor{blue}{0.69 \scriptstyle{\pm 0.05}}$  & $0.61 \scriptstyle{\pm 0.06}$  & $0.62 \scriptstyle{\pm 0.07}$  & $0.61 \scriptstyle{\pm 0.06}$  \\
GPC-GCN~\cite{li2022brain} & $0.53 \scriptstyle{\pm 0.05}$ & $0.60 \scriptstyle{\pm 0.06}$ & $0.37 \scriptstyle{\pm 0.08}$ & $0.60 \scriptstyle{\pm 0.06}$ & $0.50  \scriptstyle{\pm 0.00}$ & $0.47  \scriptstyle{\pm 0.04}$ & $0.22  \scriptstyle{\pm 0.04}$ &  $0.47  \scriptstyle{\pm 0.04}$ \\
PathNN~\cite{michel2023path} & $0.70 \scriptstyle{\pm 0.10}$ & $0.67 \scriptstyle{\pm 0.04}$ & $0.72 \scriptstyle{\pm 0.12}$ & $\textbf{0.83} \scriptstyle{\pm 0.16}$ &  $0.49 \scriptstyle{\pm 0.01}$ & $0.51 \scriptstyle{\pm 0.05}$ & $0.32 \scriptstyle{\pm 0.27}$ & $0.43 \scriptstyle{\pm 0.46}$ \\
\hline
Ours (w/o $\theta$)  & $0.49  \scriptstyle{\pm 0.01}$ & $0.60  \scriptstyle{\pm 0.06}$ & $0.37  \scriptstyle{\pm 0.08}$ & $0.60  \scriptstyle{\pm 0.06}$ & $0.50  \scriptstyle{\pm 0.00}$ & $0.47  \scriptstyle{\pm 0.04}$ & $0.22  \scriptstyle{\pm 0.01}$ & $0.47  \scriptstyle{\pm 0.04}$ \\
Ours (w/o $\mathcal{L}_{CMFC}$ ) & $\textcolor{blue}{0.74  \scriptstyle{\pm 0.08}}$ & $\textcolor{blue}{0.73 \scriptstyle{\pm 0.05}}$ & $\textcolor{blue}{0.73  \scriptstyle{\pm 0.04}}$ & $0.73  \scriptstyle{\pm 0.05}$ & $0.69  \scriptstyle{\pm 0.10}$ & $\textcolor{blue}{0.63  \scriptstyle{\pm 0.10}}$ & $\textcolor{blue}{0.64 \scriptstyle{\pm 0.10}}$ &  $\textcolor{blue}{0.63  \scriptstyle{\pm 0.10}}$ \\
{\ours} & $\textbf{0.80}  \scriptstyle{\pm 0.05}$ &$\textbf{0.73} \scriptstyle{\pm 0.04}$ & $\textbf{0.73}  \scriptstyle{\pm 0.04}$  &  $\textcolor{blue}{0.74  \scriptstyle{\pm 0.04}}$& $\textbf{0.71}  \scriptstyle{\pm 0.10}$ & 

$\textbf{0.65}  \scriptstyle{\pm 0.08}$ & $\textbf{0.66}  \scriptstyle{\pm 0.08}$ & $\textbf{0.65}  \scriptstyle{\pm 0.08}$ \\
\hline

Improv. (\%) & $8.11\%$ & - & - & $1.37\%$ & $2.89\%$ & $3.17\%$  & $3.12\%$ & $3.17\%$ \\
\hline
\end{tabular}
}
\label{table:overall_table}
\end{table*}

\section{Experiments}

In this section, we first evaluate the proposed {\ours} performance in brain disease classification (section \ref{sec:qualitative}) and its application to predicting chronological age (section \ref{sec:age_prediction}). In section \ref{sec:convergence}, we analyze the convergence of the spectral norm for $k$-hop connectivity. Finally, we examine how hierarchical brain trees interpret FC patterns between brain regions and their corresponding seven subnetworks in section \ref{sec:brain_tree}. Detailed experimental settings for {\ours} are provided in Appendix~\ref{appendix:exp}.

\vspace{-2mm}
\subsection{Datasets}

We validated two publicly available fMRI datasets —one focusing on cannabis use disorder and the other on schizophrenia. \ding{172} \textbf{Cannabis}~\cite{kulkarni2023interpretable}: The cannabis dataset comprises fMRI data from two distinct sources. The data were preprocessed from 3-Tesla fMRI acquisitions, and the mean time series for each subject was computed across 90 ROIs using the Stanford atlas parcellation~\cite{shirer2012decoding}. \ding{173} \textbf{COBRE}~\cite{calhoun2012exploring}: The Center for Biomedical Research Excellence (COBRE) dataset includes resting-state fMRI data collected from healthy controls and individuals diagnosed with schizophrenia. All MRI data were parcellated into 118 ROI regions using the Harvard-Oxford atlas. We summarized the statistics and demographics of both datasets in Appendix Table \ref{table:demographics}.

\subsection{Qualitative Results}\label{sec:qualitative}
\vspace{-1mm}
This section is dedicated to evaluating the performance of {\ours} against state-of-the-art (SOTA) GCN architectures including 1) $Pearson$ GCN 2) $k$-NN GCN 3) GAT~\cite{velivckovic2017graph} 4) BrainGNN~\cite{li2021braingnn} 5) BrainUSL~\cite{zhang2023brainusl} and BrainGSL~\cite{wen2023graph} for graph classification tasks. Additionally, we include the tree path-based model PathNN~\cite{michel2023path}. The comparative analysis was conducted using 5-fold cross-validation on two distinct datasets, benchmarking {\ours} against both baseline models and SOTA approaches. The comprehensive experimental results are presented in Table \ref{table:overall_table}. 

\begin{figure*}
    \centering
    \begin{minipage}[b]{0.3\textwidth}
        \centering
        \includegraphics[width=\textwidth]{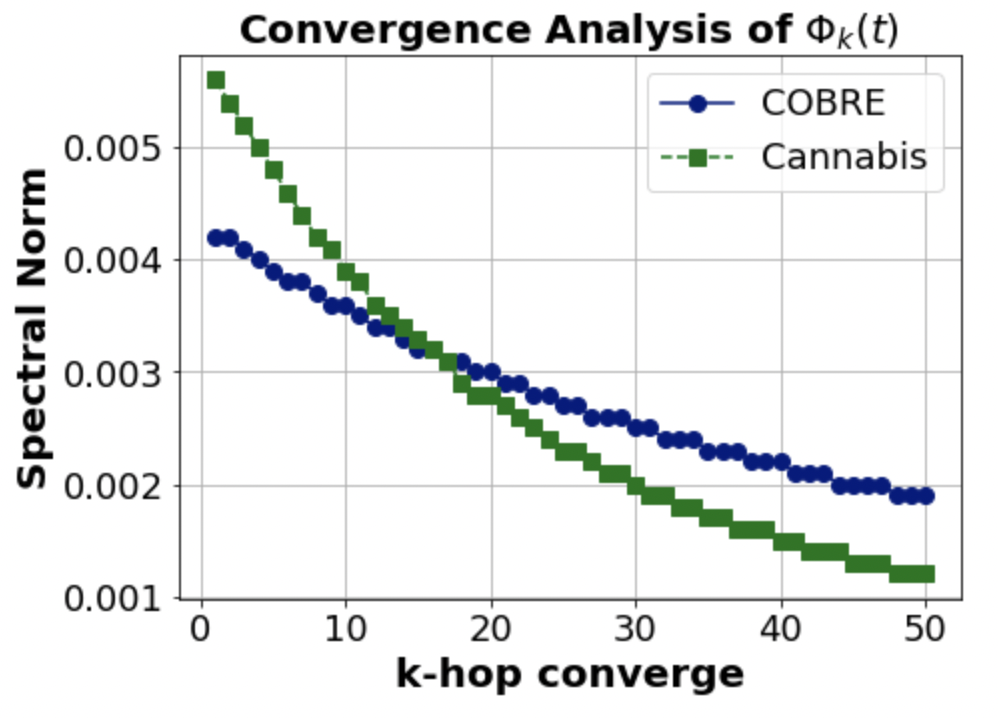}
    \end{minipage}
    \hfill
    \begin{minipage}[b]{0.3\textwidth}
        \centering
        \includegraphics[width=\textwidth]{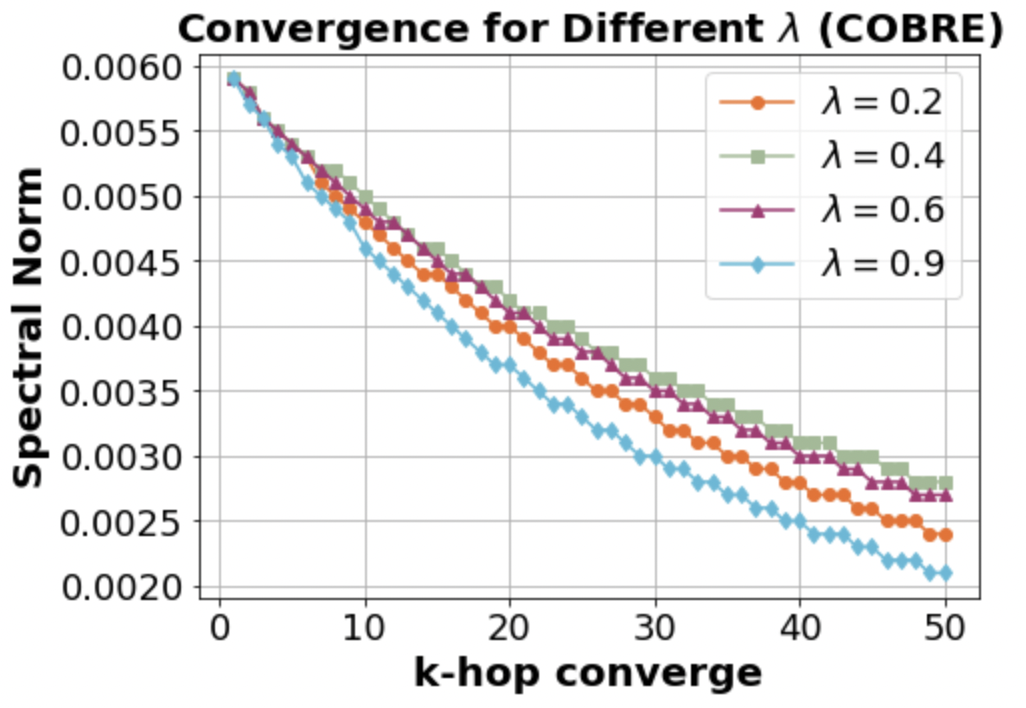}
    \end{minipage}
    \hfill
    \begin{minipage}[b]{0.3\textwidth}
        \centering
        \includegraphics[width=\textwidth]{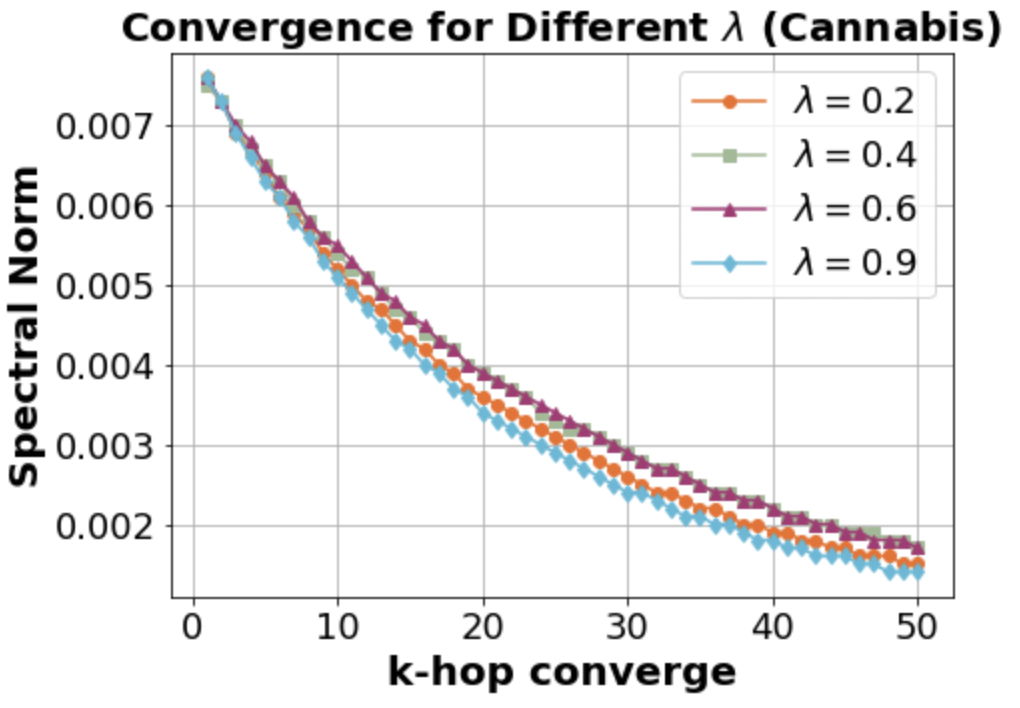}
    \end{minipage}
    \caption{\textbf{Convergence analysis of $\Phi_k(t)$ over $k$-hop} The spectral norm of $\Phi_k(t)$ reveals differential convergence rates across varying $k$-orders among distinct mental disorders, notably demonstrating that cannabis exhibits a steeper convergence gradient compared to COBRE as $\lambda$ increases.}
    \label{fig:spectral_norm}
\end{figure*}
\paragraph{Compare with Brain GCN Models.}

We quantitatively assessed brain network construction using various metrics, employing Pearson correlation and KNN as connectivity measures. As shown in Table \ref{table:overall_table}, the GAT model effectively integrates attention mechanisms into fMRI node features, achieving mean AUC scores of 0.72 and 0.67. In contrast, the conventional GCN is limited to learning solely from topological structures. Furthermore, {\ours} enhances ACC performance in fMRI-based models, improving BrainGNN (ACC $\uparrow$ 23.72\%), BrainUSL (ACC $\uparrow$ 12.30\%), and BrainGSL (ACC $\uparrow$ 12.30\%) on the cannabis dataset. On the COBRE dataset, BrainGNN, BrainUSL, and BrainGSL further improve ACC by 30.00\%, 20.37\%, and 27.45\%, respectively. 
\vspace{-2mm}
\paragraph{Compare with Path-Based Models.}

As shown in Table \ref{table:overall_table}, {\ours} outperforms existing high-connectivity view field-aware graph GCN models, such as the high-order concatenation layer-based MixHop, achieving over 6\% average performance improvement in AUC across all metrics on the cannabis dataset. Furthermore, compared to MixHop on the COBRE dataset, {\ours} not only achieved a 2\% increase in AUC, but also demonstrated a mean performance improvement of 6.3\% to 6.5\% in ACC, Precision, and Recall. Notably, although GPC-GCN can aggregate multi-level connectivity information by considering the concept of paths, it does not significantly improve classification performance on critical connectivity paths. Similarly, the PathNN framework aggregates path information starting from nodes to generate node representations but shows only limited improvements. In contrast, our {\ours} achieves a 14\% improvement in AUC on the cannabis dataset, while PathNN underperforms compared to MixHop on the COBRE dataset. The experimental results indicate that {\ours} is superior in capturing temporal FC patterns and predicting outcomes in addiction and psychiatric disorders.

\begin{figure*}
\centering
\vspace{-3mm}
\includegraphics[width=0.8\textwidth]{}
\caption{The visualization of the brain tree illustrates  psychiatric disorders structured into three hierarchical trunk levels. Panels (a-1) and (b-1) mark the most significant nodes along the tree path. The $l$-1 pathways represent  regional connectivity, corresponding to the level three brain maps on the right. Panels (a-2) and (b-2) depict the number of connections using color gradients across the hierarchical levels.}
\label{fig:brain_tree_plot}
\vspace{-5mm}
\end{figure*}

\subsection{Ablation Study}
\paragraph{The Impact of Objective Function and Demographics.}

In our experiments, we evaluate the efficacy of CMFC loss integration within our proposed {\ours}. The quantitative analysis presented in Table \ref{table:overall_table} indicates modest improvements in the cannabis dataset; however, its implementation in the COBRE dataset achieved an AUC of 0.71, with consistent performance gains exceeding 3\% across metrics, particularly in the characterization of mental disorder features. This enhancement stems from combining the loss function with CMFC loss optimization framework that simultaneously suppresses connectivity between dissimilar regions while amplifying patterns among similar neurological areas. Furthermore, our ablation experiments demonstrate that age serves as a critical modulator in AGE-GCN by adapting parameter $\theta$, substantially influencing the classification of changes in brain connectivity. 
\vspace{-2mm}
\subsection{Convergence Analysis}\label{sec:convergence}
\vspace{-1mm}

The intuition for analyzing the rate of spectral norm decay with increasing $k$-hop allows us to examine how rapidly information from distant nodes attenuates across the dynamic brain network structure. More detalil explanation can be found in Appendix~\ref{appendix:k_hop_explanation}. To further examine the convergence speed of $k$-hop FC in various brain disorders, we analyze the spectral norm under the $k=1$ setting for the convergence of $\Phi_k(t)$ in Fig. \ref{fig:spectral_norm} (left). Although the COBRE dataset initially exhibits slower convergence, particularly with the boundary defined around order 15, the overall convergence in the cannabis dataset is faster than in COBRE across different $k$-hop values. Furthermore, when analyzing the convergence of $\Phi_k(t)$ under varying $\lambda$ values, as shown in Fig. \ref{fig:spectral_norm} (middle) and (right), the $\lambda$ values for cannabis are more concentrated and stable. In contrast, COBRE requires larger $\lambda$ values to achieve convergence. The empirical findings demonstrate that examining the spectral norm of $\Phi_k(t)$ across various $k$ values effectively reveals the relationship between high-order patterns and decay characteristics of FC in different mental disorders. 

\vspace{-3mm}

\section{Brain Tree Analysis}\label{sec:brain_tree}
This section investigates brain disorders by analyzing hierarchical brain networks using decoded fMRI-derived connectomes. {\ours} deconstructs the fMRI brain network into a tree-structured component graph consisting of a main trunk and radial branches. We identify key regions associated with brain disorders within each trunk by analyzing optimal weighted high-order tree paths. Our analysis highlights the most relevant functional network regions determining critical pathways within the brain connectome tree.
\paragraph{Exploring Hierarchical Regional Patterns in Brain Disorders.}

To better understand how embedded brain features vary across different hierarchical levels of brain subnetworks, {\ours} predicts each ROI score and performs reranking, as illustrated in Fig. \ref{fig:fusion_method}. Higher levels represent trunk pathways of high-scoring nodes, strongly connected edges, and their associated brain ROIs. We mapped the atlas parcellations of both datasets (i.e., 90 ROIs and 118 ROIs) to Yeo's seven-network parcellation~\cite{yeo2011organization}, which includes the Visual Network (VN), Somatomotor Network (SMN), Dorsal Attention Network (DAN), Ventral Attention Network (VAN), Frontoparietal Network (FPN), and Default Mode Network (DMN). Additionally, we incorporated the Subcortical System (SUB) in our analysis to examine the subcortical structures. Atlas regions not belonging to these networks were classified as 'Others.' More details can be found at Appendix~\ref{appendix:atlas}.

Next, we discuss the addiction brain tree illustrated in Fig. \ref{fig:brain_tree_plot} (a-1) and (a-2), which features three trunk levels. At the primary trunk level (red pathway), the addiction brain tree predominantly exhibits connectivity among DMN (red) and VN (purple) nodes; similarly, in Fig. \ref{fig:brain_tree_plot} (b-1) and (b-2), the schizophrenia brain tree at the main trunk level primarily involves SUB (pink) nodes and partial VN nodes. At the second hierarchical level (pink pathways), the addiction cohort exhibits predominant connections within DAN, DMN, and VN nodes. In contrast, the schizophrenia group at this level demonstrates connectivity primarily among VN nodes and some FPN (light blue) nodes. At the third hierarchical level (yellow pathways), the terminal branches in the addiction group show minimal connections with the SMN nodes (green) and DAN nodes, whereas the schizophrenia cohort displays several connections among DMN nodes.

\section{Brain Age Estimation}\label{sec:age_prediction}

Comparing estimated fMRI brain age against actual chronological age in individuals with mental disorders can provide valuable insights into the severity and progression of these conditions~\cite{stankevivciute2020population}. The physiological actual age of the human brain is unknown and cannot be directly measured; however, we can predict brain age through regression tasks on fMRI data using NeuroTree learned fMRI graph embeddings. We hypothesize that there exist certain gaps between the actual chronological age of healthy individuals and the brain age of those with mental disorders. For example, when the gap between model-predicted brain age and actual chronological age is positive, it indicates accelerated brain aging.

To validate model prediction in brain age, we trained NeuroTree on a healthy control cohort and predicted non-healthy subjects to better measure brain age gaps. In Table \ref{table:age_prediction}, we demonstrate NeuroTree's predictive performance across different age groups in two datasets. Notably, the younger age group (18-25 years) exhibited markedly lower Mean Squared Error (MSE) values of 9.51 in the Cannabis dataset and 4.48 in the COBRE dataset compared to other age groups, indicating greater accuracy in brain age prediction for younger individuals. Compared to younger age groups, NeuroTree's prediction MSE across both mental disorder datasets provides valuable insight, showing that brain age estimation for older individuals may be affected by greater individual variability in brain changes associated with mental disorders after middle age, while younger groups likely exhibit more consistent patterns of FC during neurodevelopment.

\begin{figure}
\centering
\includegraphics[width=0.5\textwidth]{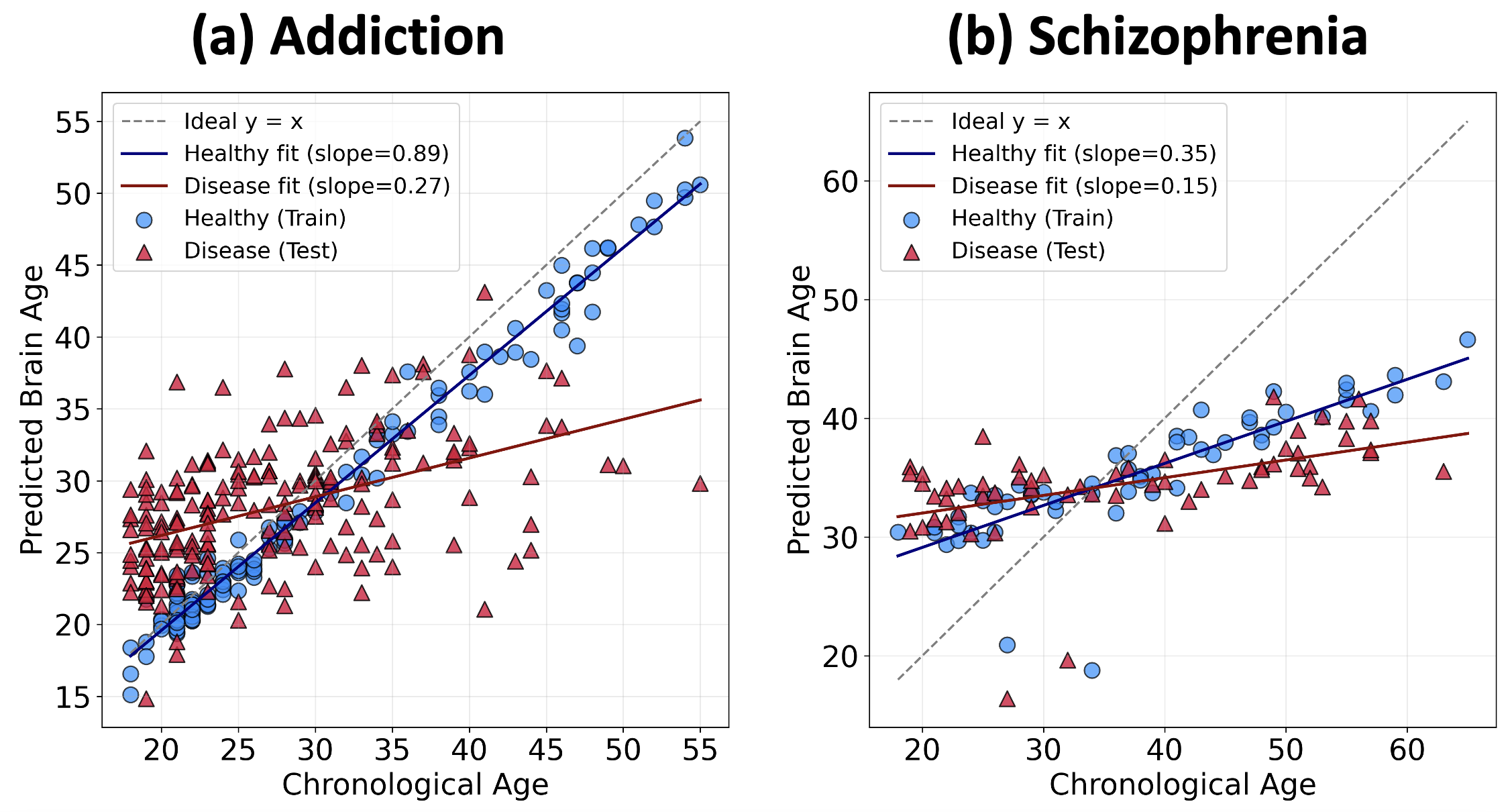}
\caption{The scatterplot shows the gaps between fMRI-predicted brain age and chronological age for healthy control and mental disorder groups.}
\label{fig:age_regression}
\end{figure}

\begin{table}
    \centering
    \vspace{-5mm}
    \caption{{\ours} prediction for different age groups.}
    \renewcommand{\arraystretch}{1}
    \begin{tabular}{l|l|c|c}
        \hline
        \textbf{Datasets} & \textbf{Age Group} &  \textbf{MSE } & \textbf{Pearson (r)} \\
        \hline
        \multirow{3}{*}{\shortstack{Cannabis}}
        & 18-25 (n=150) & $9.51 \scriptstyle{\pm 1.67}$ & $0.27 \scriptstyle{\pm 0.02}$  \\
        & 25-40 (n=127) & $27.32 \scriptstyle{\pm 3.15}$ &  $0.18 \scriptstyle{\pm 0.01}$ \\
        & 40-54 (n=41) &  $25.12 \scriptstyle{\pm 1.40}$ &  $0.18 \scriptstyle{\pm 0.00}$ \\
        \hline
        \multirow{3}{*}{COBRE}
        & 18-25 (n=35) & $4.48 \scriptstyle{\pm 0.42}$ & - \\
        & 25-40 (n=56) & $37.64 \scriptstyle{\pm 2.02}$ &  $0.10 \scriptstyle{\pm 0.01}$ \\
        & 40-66 (n=61) & $32.64 \scriptstyle{\pm 3.43}$ & $0.49 \scriptstyle{\pm 0.13}$ \\
        \hline
    \end{tabular}
    \label{table:age_prediction}
    \vspace{-4mm}
\end{table}

\vspace{-4mm}
\section{Discussion}

In section \ref{sec:brain_tree}, our analysis of the tree-structured topology revealed distinct patterns of brain subnetwork alterations associated with addiction and schizophrenia. These changes were primarily concentrated in networks governing cognitive functions, including the DMN, the visual task VN and the decision-making FPN. This network-specific disruption pattern aligns with previous findings~\cite{kleinhans2020fmri,kulkarni2023interpretable,ding2024spatial}.

Prior studies have established disrupted connectivity patterns in schizophrenia, particularly within the DMN~\cite{ishida2024aberrant,yang2022frequency}. We identified intermittent abnormal coupling between subcortical regions, especially the connections with the prefrontal cortex and cerebellum~\cite{ding2024mapping,hancock2023metastability}. Furthermore, areas of the visual cortex were correlated with impaired motor coordination and sensorimotor integration~\cite{verma2023machine,zhao2022altered}, consistent with the high FC regions illustrated in Fig. \ref{fig:brain_tree_plot}. Additionally, our analysis revealed distinct age-related differences in brain FC between addiction and schizophrenia (Table~\ref{table:age_prediction}). We further verify our finding in Fig.~\ref{fig:age_regression}, which illustrates the relationship between predicted brain age and chronological age. The predicted brain age for both mental disorders is lower than the chronological age, particularly in the higher age group. Both conditions appear to affect age-related brain characteristics, leading to an abnormal relationship between brain age and chronological age. Notably, addiction demonstrates a more significant impact on brain development and aging than schizophrenia, which is consistent with our observations in section~\ref{sec:age_prediction}.


\section{Conclusion}

This study proposed {\ours}, a novel hierarchical brain tree framework that integrates $k$-hop AGE-GCN with neural ODEs and an attention-based contrastive loss to decode functional brain pathways in psychiatric disorders. {\ours} effectively incorporates demographic to improve brain disease classification and enhances interpretability in predicting brain age while revealing distinct hierarchical connectivity patterns within the DMN, FPN, and VN across different mental disorders. Additionally, these findings demonstrate that different subtypes of mental disorders exhibit varying trajectories of FC changes across age periods. These insights contribute to the neuroscientific understanding of functional alterations induced by mental disorders, potentially informing therapeutic interventions and pharmacological treatments.

\section*{Impact Statement}

In this study, we examine the increasing societal impact of mental illness and addiction, highlighting the critical importance of understanding the underlying mechanisms of brain disorders. Traditional GNNs designed for fMRI data have proven effective in capturing network-based features; however, many of these approaches lack interpretability and fail to incorporate external demographic information within a deep learning framework. To address these limitations, we introduce {\ours}, a novel framework that establishes a robust foundation for advancing neuroscience research, identifies influential brain network regions associated with disorders, and elucidates potential therapeutic targets for future drug development and intervention strategies in specific brain regions. In future work, our NeurTree framework can be extended to related research fields such as EEG and social networks.

\section*{Reproducibility}

All of our experiments are reproducible, with results presented in Table~\ref{table:overall_table} and Table~\ref{table:age_prediction}. We will upload the preprocessed data to Google Drive and release our code on GitHub upon acceptance of the paper. Please note that, based on our experience, there may be some variations in the results shown in Table 1 and Table 2 due to differing GPU configurations. For accurate results, please refer to our released checkpoints.


\nocite{langley00}

\bibliography{example_paper}
\bibliographystyle{icml2025}

\appendix
\onecolumn
\section{Appendix: Proof Formula~\ref{eq:equation7} for K-hop Connectivity }\label{appendix_A}

\begin{theorem}[K-hop Connectivity]\label{sec:theorem_k_hop}
Given the $k$-hop connectivity adjacency operator:
\begin{equation}
\hat{A}_k(t) = \Gamma \odot A^s \odot \big[\lambda A^d(t) + (1-\lambda)(A^d(t))^T\big]^k,
\end{equation}
the k-hop connectivity can be expressed as:
\begin{equation}
\big[\lambda A^d(t) + (1-\lambda)(A^d(t))^T\big]^k = \sum_{i=0}^k \binom{k}{i}\lambda^i(1-\lambda)^{k-i}\big(A^d(t)\big)^i\big((A^d(t))^T\big)^{k-i}.
\end{equation}
\end{theorem}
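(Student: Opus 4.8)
The plan is to prove the identity by induction on $k$, mirroring the classical proof of the binomial theorem through Pascal's rule. To lighten notation I would set $P := \lambda A^d(t)$ and $Q := (1-\lambda)(A^d(t))^T$, so the left-hand side is $(P+Q)^k$ and the claim reduces to $(P+Q)^k = \sum_{i=0}^k \binom{k}{i} P^i Q^{k-i}$; expanding $P^i = \lambda^i (A^d(t))^i$ and $Q^{k-i} = (1-\lambda)^{k-i}\big((A^d(t))^T\big)^{k-i}$ afterwards recovers the stated form. The base case $k=0$ (equivalently $k=1$) is immediate. For the inductive step I would write $(P+Q)^k = P(P+Q)^{k-1} + Q(P+Q)^{k-1}$, substitute the inductive hypothesis into each term, re-index the two resulting sums so that the powers of $P$ align, and merge them via $\binom{k-1}{j-1} + \binom{k-1}{j} = \binom{k}{j}$.

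The step I expect to be the genuine obstacle — and the one worth stating explicitly as a hypothesis — is that forcing all $A^d(t)$ factors to the left of all $(A^d(t))^T$ factors, as the right-hand side demands, is only legitimate when $A^d(t)$ and $(A^d(t))^T$ commute, i.e. when $A^d(t)$ is normal. Concretely, in the inductive step the term $Q\,P^i Q^{k-1-i}$ must be rewritten as $P^i Q^{k-i}$, which requires $QP^i = P^iQ$, hence $QP = PQ$. This holds in the regime of the paper because the dynamic functional connectivity is built from symmetrized Pearson correlations, so $A^d(t) = (A^d(t))^T$; note, however, that in that symmetric case the identity degenerates to $(A^d(t))^k = (A^d(t))^k$, and the directional asymmetry exploited later in Eq.~\eqref{eq:equation7} then enters only through the Hadamard masking $\Gamma \odot A^s \odot (\cdot)$ rather than through the bracketed power itself.

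If one prefers not to assume normality, I would instead establish the weaker but downstream-sufficient statement: expand $(P+Q)^k$ as the sum over all $2^k$ ordered words in $\{P,Q\}$, group the words by the number $i$ of occurrences of $P$, factor out the scalar $\lambda^i(1-\lambda)^{k-i}$ common to every word in that group, and observe that there are exactly $\binom{k}{i}$ of them. This reproduces the coefficient bookkeeping of the claimed identity even though the matrix products inside a group need not coincide. For the only place the identity is invoked — the spectral-norm bound of Theorem~\ref{the:theorem3.2} — this suffices, since submultiplicativity gives $\|P^i Q^{k-i}\|_2 \le \|P\|_2^{\,i}\|Q\|_2^{\,k-i}$ for each word, so summing the $\binom{k}{i}$ contributions yields $\big(\lambda\|A^d(t)\|_2 + (1-\lambda)\|(A^d(t))^T\|_2\big)^k$ and hence the $\max(\lambda,1-\lambda)^k$ decay factor.
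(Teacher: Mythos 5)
Your induction is the same route the paper takes (expand $(P+Q)^{n+1}=(P+Q)^n(P+Q)$ and merge via Pascal's identity), but you have correctly isolated the step that the paper's own proof silently skips: collecting all $A^d(t)$ factors to the left of all $(A^d(t))^T$ factors requires $QP=PQ$, i.e.\ $A^d(t)(A^d(t))^T=(A^d(t))^TA^d(t)$ (normality of $A^d(t)$). Already at $k=2$ the left side is $P^2+PQ+QP+Q^2$ while the claimed right side is $P^2+2PQ+Q^2$, so the identity as stated is simply false for non-normal $A^d(t)$; the paper's inductive step asserts the conclusion without ever confronting this. Your further observation is also on target: if one rescues the identity by taking $A^d(t)$ symmetric (as a Pearson-correlation matrix is), the bracketed power degenerates to $(A^d(t))^k$ independently of $\lambda$, which undercuts the asymmetry claim of Corollary~\ref{sec:connectivity_properties} that is supposed to follow from this expansion. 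Your fallback argument --- expanding $(P+Q)^k$ over all $2^k$ ordered words, grouping by the number of $P$'s, and using submultiplicativity of $\|\cdot\|_2$ on each word --- is correct without any commutativity assumption and delivers exactly the bound $\bigl(\lambda\|A^d(t)\|_2+(1-\lambda)\|A^d(t)\|_2\bigr)^k$ needed downstream in Theorem~\ref{the:theorem3.2}, so it is the version of the statement that should actually be proved. In short: same skeleton as the paper, but your proposal is more careful than the paper's proof, and the caveat you raise is a genuine defect of the theorem as stated rather than of your argument.
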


\begin{proof}
For \(k = 1\), the formula holds trivially:
\[
\big[\lambda A^d(t) + (1-\lambda)(A^d(t))^T\big]^1 = \lambda A^d(t) + (1-\lambda)(A^d(t))^T.
\]

Assume the formula holds for \(k = n\):
\[
\big[\lambda A^d(t) + (1-\lambda)(A^d(t))^T\big]^n = \sum_{i=0}^n \binom{n}{i}\lambda^i(1-\lambda)^{n-i}\big(A^d(t)\big)^i\big((A^d(t))^T\big)^{n-i}.
\]

For \(k = n+1\), expand:
\[
\big[\lambda A^d(t) + (1-\lambda)(A^d(t))^T\big]^{n+1} = \big[\lambda A^d(t) + (1-\lambda)(A^d(t))^T\big]^n \big[\lambda A^d(t) + (1-\lambda)(A^d(t))^T\big].
\]
Substituting the inductive hypothesis and applying Pascal’s identity:
\[
\binom{n+1}{i} = \binom{n}{i-1} + \binom{n}{i},
\]
We obtain:
\[
\big[\lambda A^d(t) + (1-\lambda)(A^d(t))^T\big]^{n+1} = \sum_{i=0}^{n+1} \binom{n+1}{i}\lambda^i(1-\lambda)^{n+1-i}\big(A^d(t)\big)^i\big((A^d(t))^T\big)^{n+1-i}.
\]

Thus, by induction, the formula holds for all $k$. And we have following corollary in \ref{sec:connectivity_properties}
\end{proof}

\begin{corollary}[Asymmetric Property of $K$-hop Operator]\label{sec:connectivity_properties}
The k-hop operator preserves directional information:
\begin{equation}
\hat{A}_k(t) \neq (\hat{A}_k(t))^T, \forall \lambda \neq \frac{1}{2}
\end{equation}

\begin{proof}
By the definition of the $k$-hop connectivity operator:
\begin{equation}
\hat{A}_k(t) = \gamma \odot A^s \odot \big[\lambda A^d(t) + (1-\lambda)(A^d(t))^T\big]^k
\end{equation}

Applying the binomial expansion from \ref{sec:theorem_k_hop}:
\begin{equation}
\begin{aligned}
\hat{A}_k(t) &= \Gamma \odot A^s \odot \sum_{i=0}^k \binom{k}{i}\lambda^i(1-\lambda)^{k-i}(A^d(t))^i((A^d(t))^T)^{k-i}
\end{aligned}
\end{equation}

Taking the transpose of $\hat{A}_k(t)$:
\begin{equation}
\begin{aligned}
(\hat{A}_k(t))^T &= \Gamma^T \odot (A^s)^T \odot \sum_{i=0}^k \binom{k}{i}\lambda^i(1-\lambda)^{k-i}((A^d(t))^i)^T(((A^d(t))^T)^{k-i})^T \\
&= \Gamma \odot A^s \odot \sum_{i=0}^k \binom{k}{i}\lambda^i(1-\lambda)^{k-i}((A^d(t))^T)^i(A^d(t))^{k-i}
\end{aligned}
\end{equation}

When $\lambda \neq \frac{1}{2}$, the weights $\lambda^i$ and $(1-\lambda)^{k-i}$ are asymmetric, meaning:
\begin{equation}
\lambda^i(1-\lambda)^{k-i} \neq \lambda^{k-i}(1-\lambda)^i
\end{equation}

Therefore, $\hat{A}_k(t) \neq (\hat{A}_k(t))^T$ unless all terms in the expansion are zero.
\end{proof}
\end{corollary}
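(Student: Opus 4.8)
The plan is to push the binomial expansion of Theorem~\ref{sec:theorem_k_hop} through a transposition and reduce the claim to a simple mismatch of binomial-weighted scalars. First I would record the symmetry of the auxiliary factors: the static connectivity $A^s$ comes from Pearson correlations, so $(A^s)^T = A^s$, and the learnable mask is taken with $\Gamma^T = \Gamma$ (alternatively one absorbs this into the hypothesis). Together with $(B \odot C)^T = B^T \odot C^T$ and $(XY)^T = Y^T X^T$, this makes the transpose act termwise on the expanded operator.

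Second, starting from
\[
\hat{A}_k(t) = \Gamma \odot A^s \odot \sum_{i=0}^k \binom{k}{i}\lambda^i(1-\lambda)^{k-i}\,(A^d(t))^i\,((A^d(t))^T)^{k-i},
\]
transposition gives $(\hat{A}_k(t))^T = \Gamma \odot A^s \odot \sum_{i=0}^k \binom{k}{i}\lambda^i(1-\lambda)^{k-i}\,((A^d(t))^T)^i\,(A^d(t))^{k-i}$. Re-indexing by $j = k-i$ rewrites this last sum over the \emph{same} family of products $\{(A^d(t))^j((A^d(t))^T)^{k-j}\}_{j=0}^k$ that appears in $\hat{A}_k(t)$, the only difference being the scalar weight $\lambda^j(1-\lambda)^{k-j}$ in $\hat{A}_k(t)$ versus $\lambda^{k-j}(1-\lambda)^{j}$ in the transpose. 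Comparing weights, $\lambda^j(1-\lambda)^{k-j} = \lambda^{k-j}(1-\lambda)^j$ for some $j$ with $2j \neq k$ forces $(\lambda/(1-\lambda))^{2j-k}=1$, i.e.\ $\lambda = 1-\lambda$, hence $\lambda = \tfrac12$; so for $\lambda \neq \tfrac12$ and $k \geq 1$ the two weight sequences are genuinely different.

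Third, I would conclude that different weights yield different matrices. This is the step I expect to be the main obstacle, since distinct coefficients only imply $\hat{A}_k(t) \neq (\hat{A}_k(t))^T$ when the expansion terms $(A^d(t))^j((A^d(t))^T)^{k-j}$ do not collapse onto one another after the Hadamard mask $\Gamma \odot A^s$. I would resolve it by genericity of the non-symmetric dynamic matrix $A^d(t)$ (degenerate configurations are measure-zero), or, more concretely, by the base case $k=1$: there
\[
\hat{A}_1(t) - (\hat{A}_1(t))^T = (2\lambda-1)\,\Gamma \odot A^s \odot \big(A^d(t) - (A^d(t))^T\big),
\]
which is nonzero whenever $\lambda \neq \tfrac12$, $A^d(t)$ is not symmetric, and the mask is nonzero on some asymmetric entry $(p,q)$ — exhibiting an explicit pair of unequal entries — and then noting the same obstruction propagates to higher $k$. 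Everything beyond this is bookkeeping already licensed by Theorem~\ref{sec:theorem_k_hop}.
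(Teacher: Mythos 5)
Your proposal follows essentially the same route as the paper's proof: symmetry of $\Gamma$ and $A^s$, the binomial expansion of the $k$-hop term from Theorem~\ref{sec:theorem_k_hop}, termwise transposition, and a comparison of the binomial weights $\lambda^i(1-\lambda)^{k-i}$ against $\lambda^{k-i}(1-\lambda)^{i}$. The one place you diverge is the final step, and there your version is the stronger one. The paper passes from ``the weight sequences differ'' to ``the matrices differ,'' adding only the caveat ``unless all terms in the expansion are zero''; this leaves open exactly the failure mode you flag, namely that the products $(A^d(t))^i((A^d(t))^T)^{k-i}$ could recombine into a symmetric matrix even though their scalar coefficients are asymmetric --- most obviously when $A^d(t)$ is itself symmetric, in which case $\hat{A}_k(t)=(\hat{A}_k(t))^T$ for \emph{every} $\lambda$ and the corollary as stated is false without further hypotheses. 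Your explicit $k=1$ identity $\hat{A}_1(t)-(\hat{A}_1(t))^T=(2\lambda-1)\,\Gamma\odot A^s\odot\bigl(A^d(t)-(A^d(t))^T\bigr)$ isolates the correct extra assumptions (a genuinely asymmetric $A^d(t)$ and a Hadamard mask that does not annihilate every asymmetric entry), and the appeal to genericity for higher $k$ is a reasonable way to finish. In short: same decomposition and same key lemma, but your treatment of the last implication repairs a gap that the paper's proof glosses over rather than reproducing it.
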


\section{Appendix: Proof of Theorem ~\ref{the:theorem3.2}}\label{the:theorem1}
We prove the $\|\hat{A}_k(t)\|_{2}$ has an upper bound  as $k$-hop approaches infinity:

\begin{equation}
\lim_{k \to \infty} \|\hat{A}_k(t)\|_{2} \leq \|\Gamma\|_{2} \cdot \|A^s\|_{2} \cdot \max(\lambda, 1-\lambda)^k
\end{equation}

\begin{proof}
    Using matrix norm properties: 

    \begin{equation}
        \|A \odot B\|_{2} \leq \|A\|_{2}\cdot \|B\|_{2}
    \end{equation}
Applying the triangle inequality when $\lambda \in [0,1]$, we obtain:
    \begin{equation}\label{eq:convex}
        \|\lambda A + (1-\lambda)B\|_{2} \leq \max(\lambda, 1-\lambda)(\|A\|_{2} + \|B\|_{2})
    \end{equation}
    
Applying the $l^2$-norm to $\hat{A}_k(t)$:

\begin{equation}
\|\hat{A}_k(t)\|_{2} = \|\Gamma \odot A^s \odot [\lambda A^d(t) + (1-\lambda)(A^d(t))^T]^k\|_{2}
\end{equation}

And we have:

\begin{equation}\label{eq:eq_29}
\|\hat{A}_k(t)\|_{2} \leq \|\Gamma\|_{2} \cdot \|A^s\|_{2} \cdot \|[\lambda A^d(t) + (1-\lambda)(A^d(t))^T]^k\|_{2}
\end{equation}

Using Eq. \ref{eq:convex}, we further bound:

\begin{equation}
\|[\lambda A^d(t) + (1-\lambda)(A^d(t))^T]\|_{2} \leq \max(\lambda, 1-\lambda)(\|A^d(t)\|_{2} + \|(A^d(t))^T\|_{2})
\end{equation}
Since \(\|(A^d(t))^T\|_{2} = \|A^d(t)\|_{2}\), we obtain:

\begin{equation}\label{eq:eq31}
\begin{split}
\|[\lambda A^d(t) + (1-\lambda)(A^d(t))^T]\|_{2} & \leq 2\max(\lambda, 1-\lambda)\|A^d(t)\|_{2} 
\end{split}
\end{equation}

Taking the power of $k$ on both sides of Eq.~\ref{eq:eq31}, we have $\|\hat{A}_{k}(t)\|_{2} \leq \|\hat{A}(t)\|_{2}^{k}$ and the following inequality holds:

\begin{equation}
\|[\lambda A^d(t) + (1-\lambda)(A^d(t))^T]^k\|_{2} \leq \max(\lambda, 1-\lambda)^k
\end{equation}

Substituting into equation \ref{eq:eq_29}, we get:

\begin{equation}
\|\hat{A}_k(t)\|_{2} \leq \|\Gamma\|_{2} \cdot \|A^s\|_{2} \cdot \max(\lambda, 1-\lambda)^k.
\end{equation}

Since $\max(\lambda, 1-\lambda) < 1$ for $\lambda \in (0,1)$, the series converges geometrically. This result indicates that \(\|\hat{A}_k(t)\|\) converges to zero as \(k \to \infty\), provided the spectral radius of \(A\), denoted by \(\lambda\), satisfies \(\max(\lambda, 1-\lambda) < 1\). Consequently, the rate of decay of \(\|\hat{A}_k(t)\|\) is governed by \(\max(\lambda, 1-\lambda)^k\), which demonstrates the dependence on the spectral properties of \(A\) and ensures convergence for sufficiently small \(\lambda\). 
\end{proof}

\onecolumn
\section{Appendix: Proof for Proposition~\ref{pro:convercence}}\label{Proposition_3.2_proof}
To prove the proposition~\ref{pro:convercence}, we express the neural ODE system as:

\begin{equation}
    \frac{dH^{(l)}(t)}{dt} = F(H^{(l)}(t), t)
\end{equation}
where
\begin{equation}
    F(H^{(l)}(t), t) = \sigma\left(\sum_{k=0}^{K-1} \Phi_k(t)H^{(l)}(t)W_k^{(l)}\right)
\end{equation}
\begin{proof}
Assume $\sigma$ is a differentiable function and continuous in $t$. We show that $F$ satisfies Lipschitz continuity in $H^{(l)}$ and continuity in $t$. Let $H^{(l)}_{1}$ and $H^{(l)}_{2}$ be two output states:

\begin{align}
    &\|F(H_1^{(l)}, t) - F(H_2^{(l)}, t)\|_2 \notag \\
    &= \left\|\sigma\left(\sum_{k=0}^{K-1} \Phi_k(t)H_1^{(l)}W_k^{(l)}\right) - 
    \sigma\left(\sum_{k=0}^{K-1} \Phi_k(t)H_2^{(l)}W_k^{(l)}\right)\right\|_2
\end{align}
since $\|F(H_1^{(l)}, t) - F(H_2^{(l)}, t)\|_2$ is bounded by the Lipschitz constant of $\sigma$, denoted by $L_{\sigma}$, we obtain:

\begin{equation}
    \|F(H_1^{(l)}, t) - F(H_2^{(l)}, t)\|_2 \leq L_\sigma \left\|\sum_{k=0}^{K-1} \Phi_k(t)(H_1^{(l)} - H_2^{(l)})W_k^{(l)}\right\|_2
\end{equation}

Using matrix norm properties and $\|\Phi_k(t)\|_2 \leq 1$ and 
$W_k^{(l)}$ are bounding the norms. The bound equation can be written as:

\begin{equation}
    \|F(H_1^{(l)}, t) - F(H_2^{(l)}, t)\|_2 \leq L_\sigma \sum_{k=0}^{K-1} \|W_k^{(l)}\|_2 \|H_1^{(l)} - H_2^{(l)}\|_2
\end{equation}

Let  $M = \max_k \|W_k^{(l)}\|_2$. Then:

\begin{equation}
    \|F(H_1^{(l)}, t) - F(H_2^{(l)}, t)\|_2 \leq L_\sigma KM \|H_1^{(l)} - H_2^{(l)}\|_2
\end{equation}
Thus, $F$ is Lipschitz continuous in $H^{(l)}$ with Lipschitz constant $L_F = L_\sigma KM$.  For continuity in $t$, we use the Lipschitz condition of $\Phi_k(t)$:
\begin{equation}\label{eq:eq_30}
    \|\Phi_k(t_1) - \Phi_k(t_2)\|_2 \leq L\|t_1 - t_2\|_2
\end{equation}

Thus, Eq. \ref{eq:eq_30} ensures that $F$ is continuous in $t$. According to Picard's existence theorem, the $k$-hop ODE-GCN has a unique solution.
\end{proof}

\section{Appendix: Finding the Shortest Paths in a Weighted Brain Tree}

\begin{theorem}[Optimal Path Extremes]\label{theorem:optimal_path}
Let \(\mathcal{W}(S) = \sum_{v \in P} S(v; \boldsymbol{\Theta})\) represent the node score contribution and \(\mathcal{W}(C) =\sum_{e_{ij} \in E(P)} \mathcal{C}_{i,j}\) represent the connectivity strength contribution in the pruned graph \(\mathcal{T}(G)\). For an arbitrary integer threshold $\lambda^{*}$, the optimal path \(P^{*}\) satisfies the following interval constraint $\mathcal{W}(S) \leq \lambda^{*} \leq\mathcal{W}(C)$.
\end{theorem}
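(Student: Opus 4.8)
The plan is to treat the composite weight $\mathcal{W}(P) = \alpha\,\mathcal{W}(S) + (1-\alpha)\,\mathcal{W}(C)$ as a one–parameter family indexed by $\alpha \in [0,1]$ and to analyze how the minimizing path changes with $\alpha$, following the primal--dual parametric framework of Xue~\cite{xue2000primal}. First I would pin down the finite ground set: because $\mathcal{T}(G)$ is a tree (the MST of Definition~\ref{def:MST}), the path between any ordered pair of vertices is unique, so the feasible set $\Pi$ of candidate trunk paths is finite, and each $P \in \Pi$ is summarized by the pair $(\mathcal{W}_P(S), \mathcal{W}_P(C)) \in \mathbb{R}^2$ of its (additive) node–score and connectivity contributions. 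Then $\phi(\alpha) := \min_{P \in \Pi}\big[\alpha\,\mathcal{W}_P(S) + (1-\alpha)\,\mathcal{W}_P(C)\big]$ is a minimum of finitely many affine functions of $\alpha$, hence concave and piecewise linear with finitely many breakpoints $0 = \alpha_0 < \alpha_1 < \dots < \alpha_m = 1$; on each maximal subinterval $[\alpha_L,\alpha_U]$ between consecutive breakpoints a single path $P^*$ attains the minimum, which is exactly the assertion that the optimal path is determined once $\alpha^* \in [\alpha_L,\alpha_U]$.

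Next I would extract the interval constraint via Lagrangian relaxation of the associated budgeted problem $\min_P \mathcal{W}_P(C)$ subject to $\mathcal{W}_P(S) \le \lambda^*$. Its Lagrangian is $L_\mu(P) = \mathcal{W}_P(C) + \mu\big(\mathcal{W}_P(S) - \lambda^*\big)$, and dividing by $1+\mu$ shows $\arg\min_P L_\mu(P) = \arg\min_P \mathcal{W}(P)$ under the correspondence $\alpha = \mu/(1+\mu)$. At the optimal multiplier $\mu^* \ge 0$ (equivalently $\alpha^* \in [\alpha_L,\alpha_U]$), weak duality together with complementary slackness forces the Lagrangian-optimal path $P^*$ to be tight at the threshold: its node–score mass cannot exceed the budget, so $\mathcal{W}(S) \le \lambda^*$, while the residual objective it is charged for is bounded below by the dual value, giving $\lambda^* \le \mathcal{W}(C)$. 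More concretely, at the critical parameter two paths $P_L, P_U$ are simultaneously optimal with $\mathcal{W}_{P_L}(S) \le \lambda^* \le \mathcal{W}_{P_U}(S)$, and the trunk path returned by Algorithm~\ref{fig:tree_algorithm} is precisely the one whose coordinate pair brackets $\lambda^*$.

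The step I rely on is the monotonicity of the selected path's contributions in $\alpha$, namely that $\mathcal{W}_{P^*(\alpha)}(S)$ is non-increasing and $\mathcal{W}_{P^*(\alpha)}(C)$ non-decreasing; this follows from a short exchange argument — write the two optimality inequalities for $P^*(\alpha_1)$ and $P^*(\alpha_2)$ at their own parameters, add them, and the cross terms collapse to $(\alpha_1 - \alpha_2)\big(\mathcal{W}_{P^*(\alpha_1)}(S) - \mathcal{W}_{P^*(\alpha_2)}(S)\big) \le 0$. I expect this, together with ensuring there is no genuine combinatorial duality gap at $\mu^*$ (so that the optimal path actually straddles $\lambda^*$ as $\alpha$ crosses $\alpha^*$), to be the main obstacle; the concavity/piecewise-linear scaffolding and the finiteness of $\Pi$ are routine. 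A secondary point is that $\mathcal{W}(C)$ should be read as the aggregated high-order quantity $\sum_{s} \mathcal{F}^{(s)}_{v_iv_j}$ of Eq.~\eqref{eq:high_order_decomp}, which is still additive along the unique tree path, so the argument carries over verbatim with $\mathcal{C}_{i,j}$ replaced by that aggregate.
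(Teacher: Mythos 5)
First, a point of comparison: the paper itself offers no proof of Theorem~\ref{theorem:optimal_path} --- the statement is followed only by a one-sentence remark that the interval $[\alpha_L,\alpha_U]$ can be located by bisection, with a citation to \cite{xue2000primal}. So your Lagrangian-relaxation reconstruction is not an alternative to the paper's argument; it is an attempt to supply an argument where none is given. The scaffolding you build is sound as far as it goes: the set of candidate trunk paths in $\mathcal{T}(G)$ is finite, $\phi(\alpha)$ is a minimum of finitely many affine functions and hence concave and piecewise linear, the reparametrization $\alpha=\mu/(1+\mu)$ correctly identifies minimizers of the Lagrangian $L_\mu$ with minimizers of $\mathcal{W}(P)$ up to an additive constant, and your exchange argument for the monotonicity of $\mathcal{W}_{P^*(\alpha)}(S)$ in $\alpha$ is the standard and correct one. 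All of this faithfully reproduces the parametric-search machinery of \cite{xue2000primal}.

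The gap is in the second inequality. Weak duality and complementary slackness at the optimal multiplier $\mu^*$ control only the \emph{budgeted} attribute: they tell you that among the $L_{\mu^*}$-optimal paths there is a feasible one, i.e.\ one with $\mathcal{W}(S)\le\lambda^*$, and that at a breakpoint two optimal paths bracket the budget \emph{in the same coordinate}, $\mathcal{W}_{P_L}(S)\le\lambda^*\le\mathcal{W}_{P_U}(S)$. They say nothing about where $\lambda^*$ sits relative to $\mathcal{W}(C)$, which is measured in different units (connectivity strength rather than node score); the weak-duality inequality $L_{\mu^*}(P^*)\le \mathrm{OPT}$ does not rearrange into $\lambda^*\le\mathcal{W}_{P^*}(C)$, and no exchange argument will produce it --- your phrase ``the residual objective it is charged for is bounded below by the dual value'' is doing work that the inequalities cannot support. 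This is less a slip in your write-up than a defect inherited from the statement itself: read literally (``for an arbitrary integer threshold $\lambda^*$''), the theorem asserts a two-sided bound between incommensurable quantities that fails for $\lambda^*$ sufficiently large or sufficiently negative. The only part of the claim that admits a proof along your lines is the one-sided budget feasibility $\mathcal{W}(S)\le\lambda^*$ together with the same-coordinate bracket at the critical $\alpha$; a salvageable version of the theorem should be restated in that form rather than as $\mathcal{W}(S)\le\lambda^*\le\mathcal{W}(C)$.
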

In practice, the minimum interval can be determined using the bisection method within $[\alpha_{L},\alpha_{U}]$ using a bisection method such that $\mathcal{W}(S) \leq \mathcal{W}_{\alpha_{L}}(P^{*}) \leq \alpha^{*} \leq \mathcal{W}_{\alpha_{U}}(P^{*}) \leq\mathcal{W}(C)$~\cite{xue2000primal}.

\onecolumn
\section{Prediction of FC Alterations in Brain Age Across Different Age Groups}

This section presents our application of {\ours} for predicting mental disorders and examining their impact on brain FC changes across various age cohorts. We utilized AGE-GCN embeddings to reconstruct FC values for different age groups. Subsequently, we analyzed FC values for various age groups, as illustrated in Fig.~\ref{fig:age_comparision}. The results presented in Fig.~\ref{fig:age_comparision} (a) and (b) reveal that significant within-group differences were observed exclusively among cannabis users in the 18--25 age group. We observe significant changes in FC values associated with cannabis use disorder and schizophrenia across various age groups, indicating age-dependent variations in FC patterns among individuals with these psychiatric disorders.


\begin{figure}[h]
\centering
\subfigure[Cannabis]{
\includegraphics[width=0.7\textwidth]{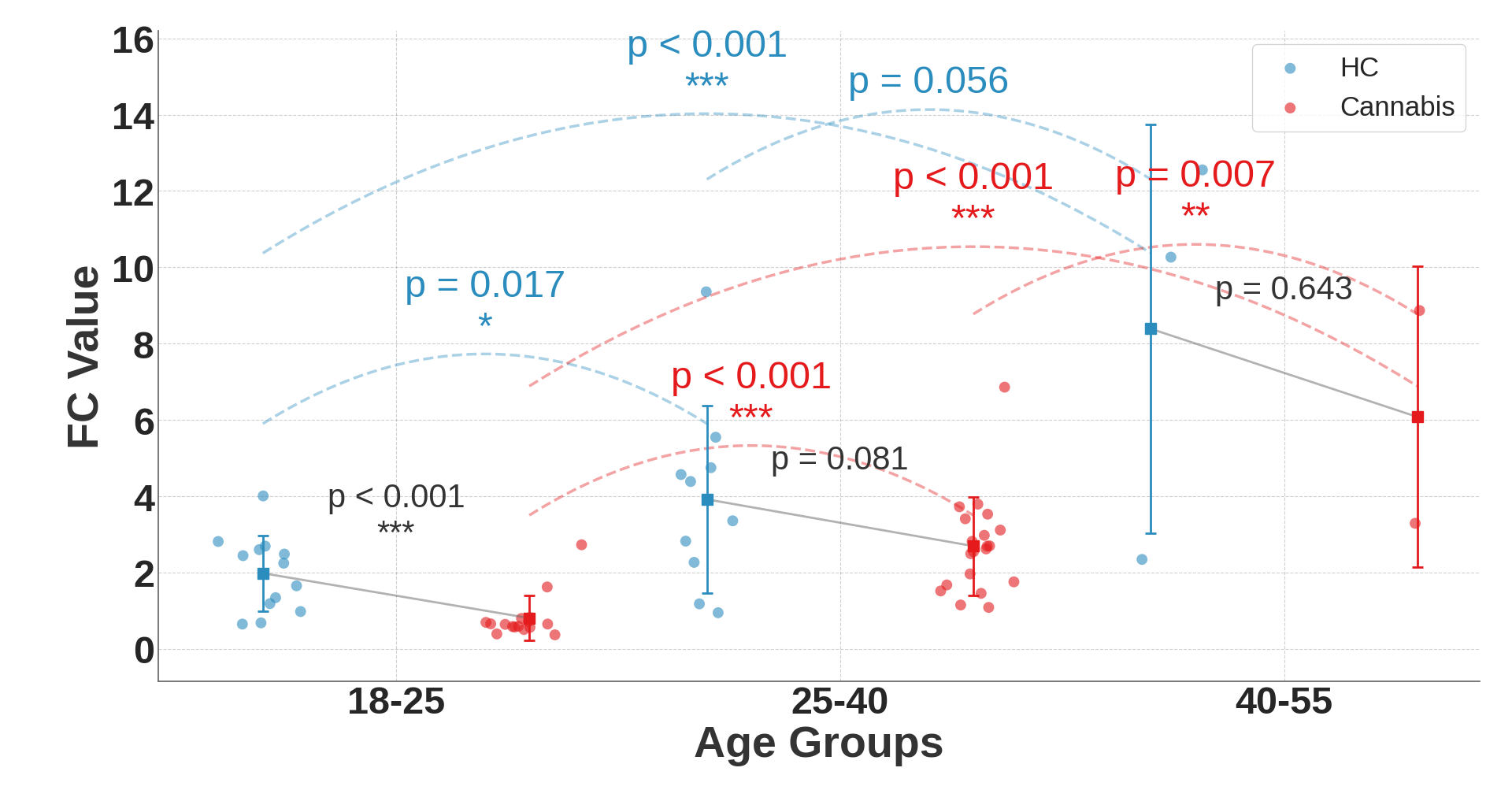}
}\hspace{-4mm}
\subfigure[COBRE]{
\includegraphics[width=0.7\textwidth]{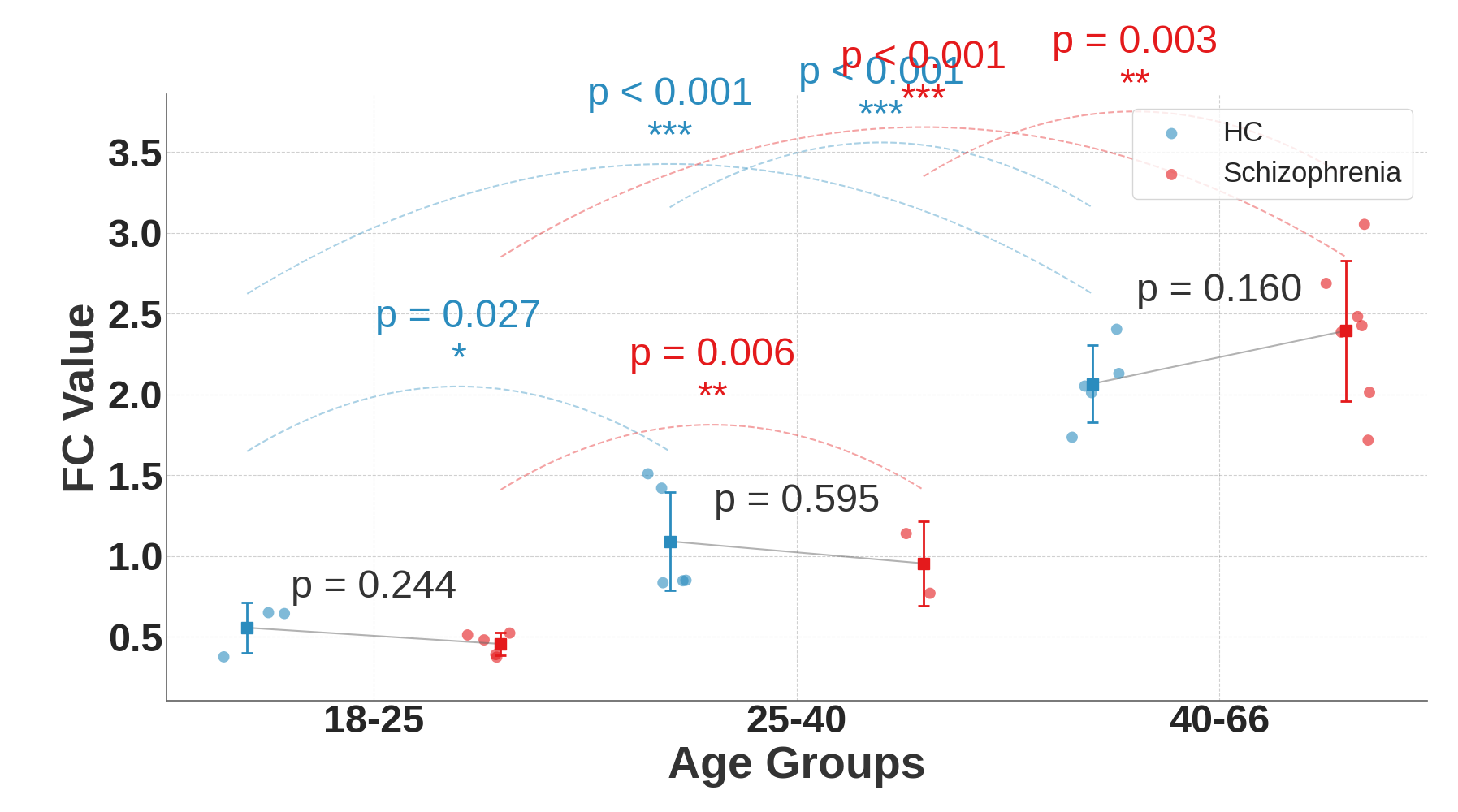}
}
\caption{Age-related changes in predicted FC values across different groups. (a) cannabis users versus healthy controls (HC) and (b) schizophrenia patients versus HC using the COBRE dataset. Note that asterisks indicate statistical significance ($^*p < 0.05$, $^{**}p < 0.01$, $^{***}p < 0.001$). Error bars represent standard error of the mean.}
\label{fig:age_comparision}
\end{figure}

\onecolumn
\section{High-Order Tree Path Influence}\label{appendix:path_influence}

In further examining the characterization of fMRI in the tree path, we investigated the contribution of FC strength through Eq. \ref{eq:all_path} across two distinct subtypes of fMRI representations, as illustrated in Fig. \ref{fig:tree_weight_path} Our observations revealed that cannabis-related pathways exhibited relatively higher average FC weights compared to the COBRE dataset. Notably, both the addiction and schizophrenia groups demonstrated significant decreases following an increase in $\alpha$, while the HC cohort maintained relatively stable patterns. These findings provide compelling evidence that higher-order brain tree paths effectively capture the differential contributions of FC strength arising from alterations related to addiction or schizophrenia.

\begin{figure}[H]
\centering
 \includegraphics[width=1\textwidth]{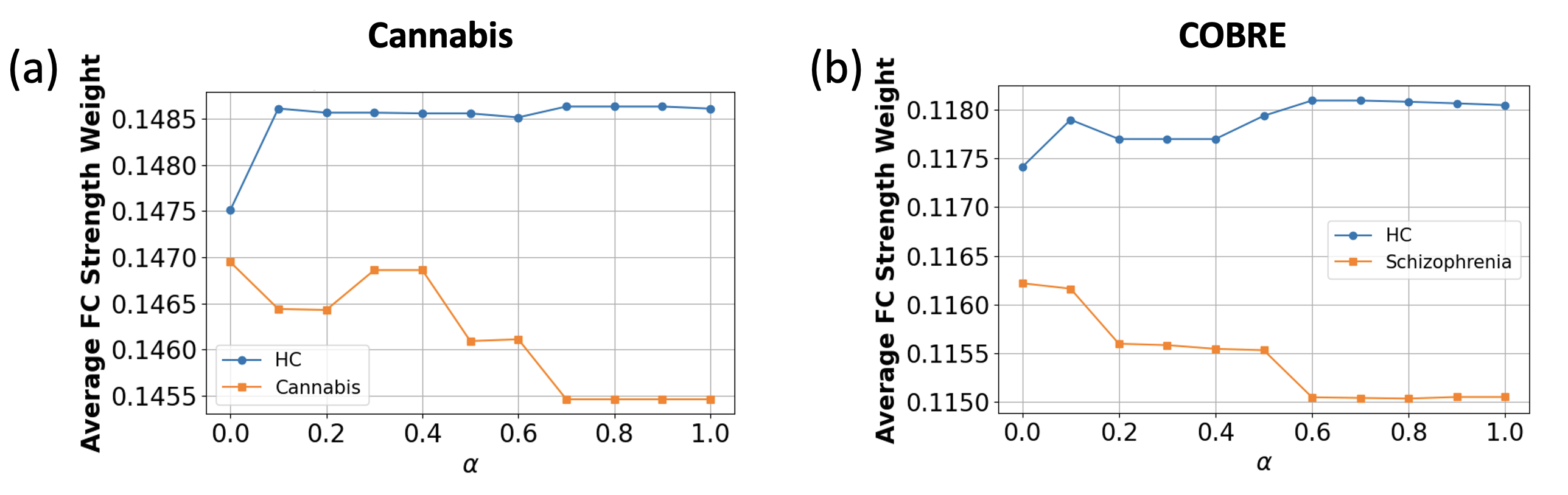}
\caption{Average FC value weights versus $\alpha$ in high-order tree paths, comparing (a) HC vs cannabis users and (b) HC versus schizophrenia patients. The results indicate reduced connectivity patterns with increasing $\alpha$ in both clinical groups compared to the control group.}
\label{fig:tree_weight_path}
\end{figure}

\section{More Related Work}

Schizophrenia (SCZ) and substance abuse (SA) both profoundly impact brain structure and function, with a particularly high comorbidity rate observed among younger populations. Research indicates that SCZ patients frequently exhibit functional abnormalities in critical brain regions, including the DMN, central executive network (CEN), and visual pathways~\cite{ma2023neuronal}. However, findings remain inconsistent regarding whether comorbid SA history in SCZ patients exacerbates these neural impairments. \cite{wojtalik2014fmri} conducted fMRI scans during working memory tasks and discovered significantly enhanced activation in frontal-parietal-thalamic regions among control subjects with SA history. Interestingly, SCZ patients showed no significant activation differences regardless of SA history, suggesting that pre-existing neural deficits in SCZ may obscure the additional effects of SA~\cite{wojtalik2014fmri}. In addition, \cite{passiatore2023changes} further demonstrated that FC alterations associated with SCZ risk emerge during adolescence, particularly in cerebellum-occipitoparietal and prefrontal-sensorimotor regions, strongly correlating with polygenic risk scores (PRS).

Beyond network abnormalities, SCZ and SA patients commonly exhibit accelerated brain aging, where their brain-predicted age significantly exceeds their chronological age, indicating heightened functional deterioration of the brain—a phenomenon with critical implications for understanding cognitive decline and disease risk~\cite{cole2017predicting,lombardi2021explainable}.  With the emergence of GNN applications to fMRI data, researchers can now model and analyze brain regions as nodes and functional connections as edges in graph structures. The Pooling Regularized GNN (PR-GNN) proposed by~\cite{li2020pooling}. Applying this methodology to SCZ with comorbid SA may simultaneously facilitate classification and biomarker identification, while incorporating demographic variables such as age, gender, and SA history enables individualized modeling.
In summary, SCZ and SA have age-related and highly intertwined effects on brain functional networks. GNN methodologies provide technical tools with both explanatory and predictive capabilities, offering promise for deepening our understanding of the neural mechanisms underlying psychiatric disorders and their potential clinical applications.

\onecolumn
\section{Experiments Detail}\label{appendix:exp}

\paragraph{Experiments Setting}

\paragraph{Baseline Models.} The classical methods for constructing brain networks include Pearson correlation and $k$-nearest neighbor approaches combined with GCN, such as:These methods are as follows:1) $Pearson$ GCN: This model constructs an fMRI graph using Pearson correlation coefficients to represent relationships between ROIs. 2) $k$-NN GCN: This model builds brain networks by connecting k-nearest neighbors based on similar input features within brain regions in the graph. 3) GAT~\cite{velivckovic2017graph}: This model extends the GCN framework by incorporating an attention mechanism that enhances the aggregation and learning of features from neighboring nodes. 4) BrainGNN~\cite{li2021braingnn}: BrainGNN is an interpretable GNN framework designed for analyzing fMRI data. This framework effectively identifies and characterizes significant brain regions and community patterns associated with specific neurological conditions. 5) BrainUSL~\cite{zhang2023brainusl}: BrainUSL learns the graph structure directly from BOLD signals while incorporating sparsity constraints and contrastive learning to capture meaningful connections between brain regions. Finally, we compare {\ours} with other methods that incorporate higher-order node information, specifically MixHop~\cite{abu2019mixhop} and graph path-based models such as GPC-GCN~\cite{li2022brain} and PathNN~\cite{michel2023path}.

\paragraph{Evaluation Metrics}

In our fMRI graph classification experiments (Section~\ref{sec:qualitative}), we evaluate {\ours} using Area Under the Curve (AUC), Accuracy (Acc.), Precision (Prec.), and Recall (Rec.). In brain age prediction (Section~\ref{sec:age_prediction}), we evaluate {\ours} predicted age and chronological age using MSE metric.

\paragraph{Parameters Setting}
In this study, two datasets were trained with a batch size of 16 for 100 epochs using a learning rate of 0.001.

\begin{table}[h]
\centering
\caption{Training Parameters.}
\label{tab:hyperparams}
\begin{tabular}{|c|l|c|}
\hline
\textbf{Notation} & \textbf{Meaning} & \textbf{Value} \\ \hline
$\rho$              & Scale parameter & $0.5$ \\ \hline
$T$               & Number of time segment & 2 \\ \hline
$\lambda$           & $k$-hop connectivity balance between dynamic FC matrices $A^{d}(t)$ & (0,1) \\ \hline
$\beta$           & Learnable age-modulated parameter  & (0,1) \\ \hline
$\Gamma $ & Learnable graph weight parameter  & - \\ \hline
$W_k^{(l)}$               & Learnable weight matrix in $l$th layer & - \\ \hline
\end{tabular}
\end{table}

\paragraph{Additional Ablation Study}

In this subsection, we conduct ablation experiments on the performance of brain networks classification with respect to two parameters $\lambda$ and $\Gamma$ on two mental disorders dataset. The experimental results demonstrate that for Cannabis, the model achieves optimal accuracy (0.8) and AUC value (0.87) at $\lambda=0.2$, with performance decreasing as $\lambda$ increases to 0.4, followed by a recovery at $\lambda=0.6$. In contrast, COBRE exhibits a different parameter sensitivity pattern, with accuracy showing a decreasing-then-increasing trend as $\lambda$ increases from 0.2 to 0.8, ultimately reaching peak accuracy (0.8) at $\lambda=0.8$. Regarding the $\Gamma$ parameter, Cannabis shows an overall decline in accuracy as $\Gamma$ increases. On the other hand, COBRE reaches its optimal accuracy (0.8) and AUC value (0.79) at $\Gamma=0.6$, but experiences a sharp performance drop at $\Gamma=0.8$. These findings suggest that Cannabis-related brain network structures are more effectively captured at lower $\lambda$ values, whereas COBRE-related brain functional abnormalities are more efficiently identified at higher $\lambda$ and moderate $\Gamma$ values.

\begin{figure}[H]
\centering
\includegraphics[width=1\textwidth]{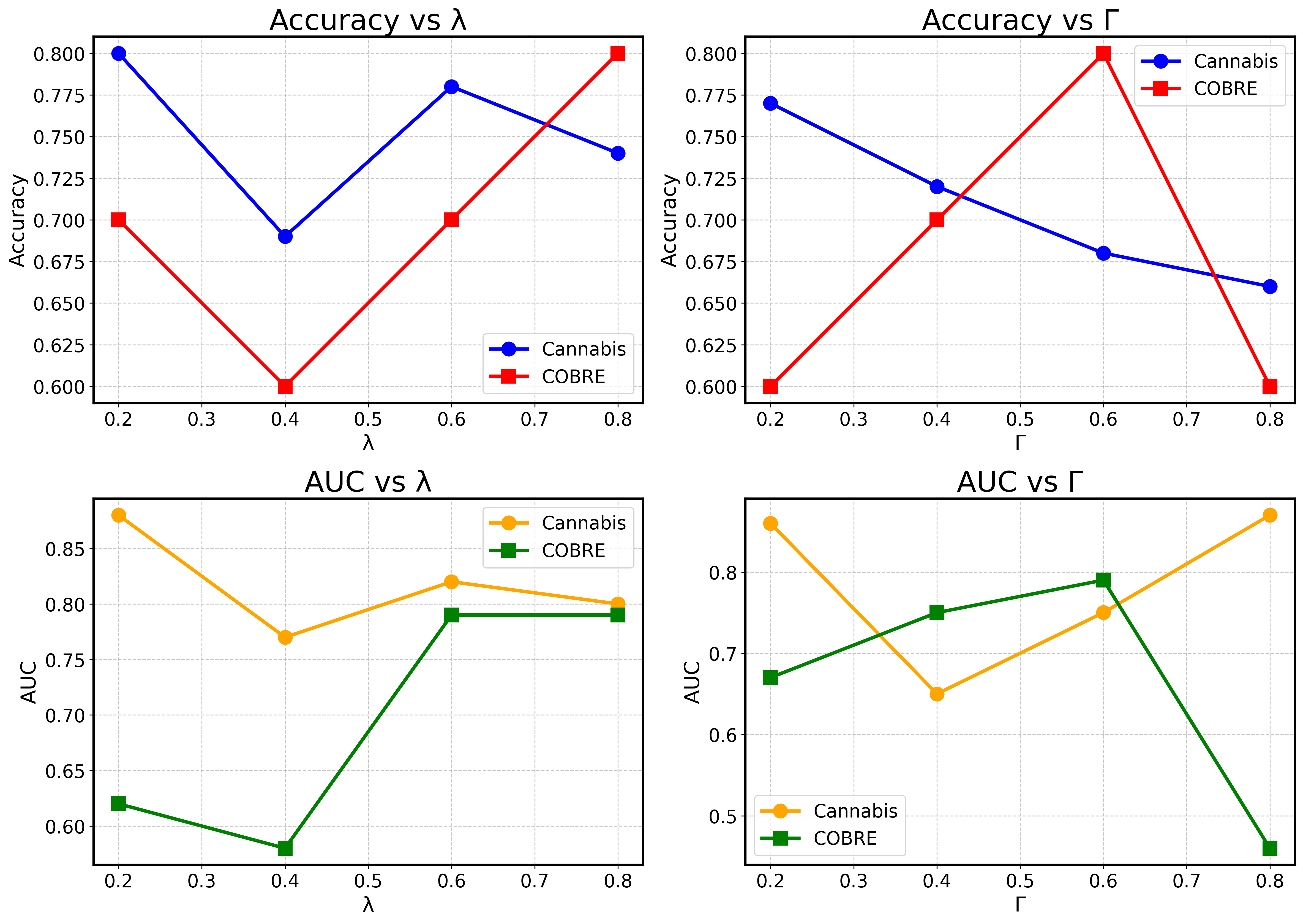}
\caption{Evaluation of brain network classification performance based on model parameters $\lambda$ and $\Gamma$.}
\label{fig:ablation_lambda_beta_acc_auc}
\end{figure}

\paragraph{Environment Setting}

In our experiments, we follow the environment setting:

\begin{table}[ht]
\centering
\caption{Experiment Environment Details}
\label{tab:environment_details}
\begin{tabular}{|l|l|}
\hline
\textbf{Component}        & \textbf{Details}                  \\ \hline
GPU                       & NVIDIA GeForce RTX 3070           \\ \hline
Python Version            & 3.7.6                             \\ \hline
Numpy Version             & 1.21.6                            \\ \hline
Pandas Version            & 1.3.5                             \\ \hline
Torch Version             & 1.13.1                            \\ \hline
Operating System          & Linux                             \\ \hline
Processor                 & x86 64                            \\ \hline
Architecture              & 64-bit                            \\ \hline
Logical CPU Cores         & 32                                \\ \hline
Physical CPU Cores        & 24                                \\ \hline
Total Memory (GB)         & 62.44                             \\ \hline
\end{tabular}
\end{table}

\section{Biological Interpretation of k-hop Convergence}\label{appendix:k_hop_explanation}

A faster decay in the spectral norm implies that higher-order information becomes less influential, indicating a more localized brain interaction pattern, while a slower decay suggests that longer-range dependencies are preserved in the FC network. As the k-value increases, more distant neural connections are considered, which may reflect the integration process of long-distance FC in the brain. Thus, the spectral norm convergence analysis provides quantitative insights into the scale and extent of connectivity alterations, complementing the interpretability of our tree-structured model.

\begin{figure}[H]
\centering
\includegraphics[width=0.5\textwidth]{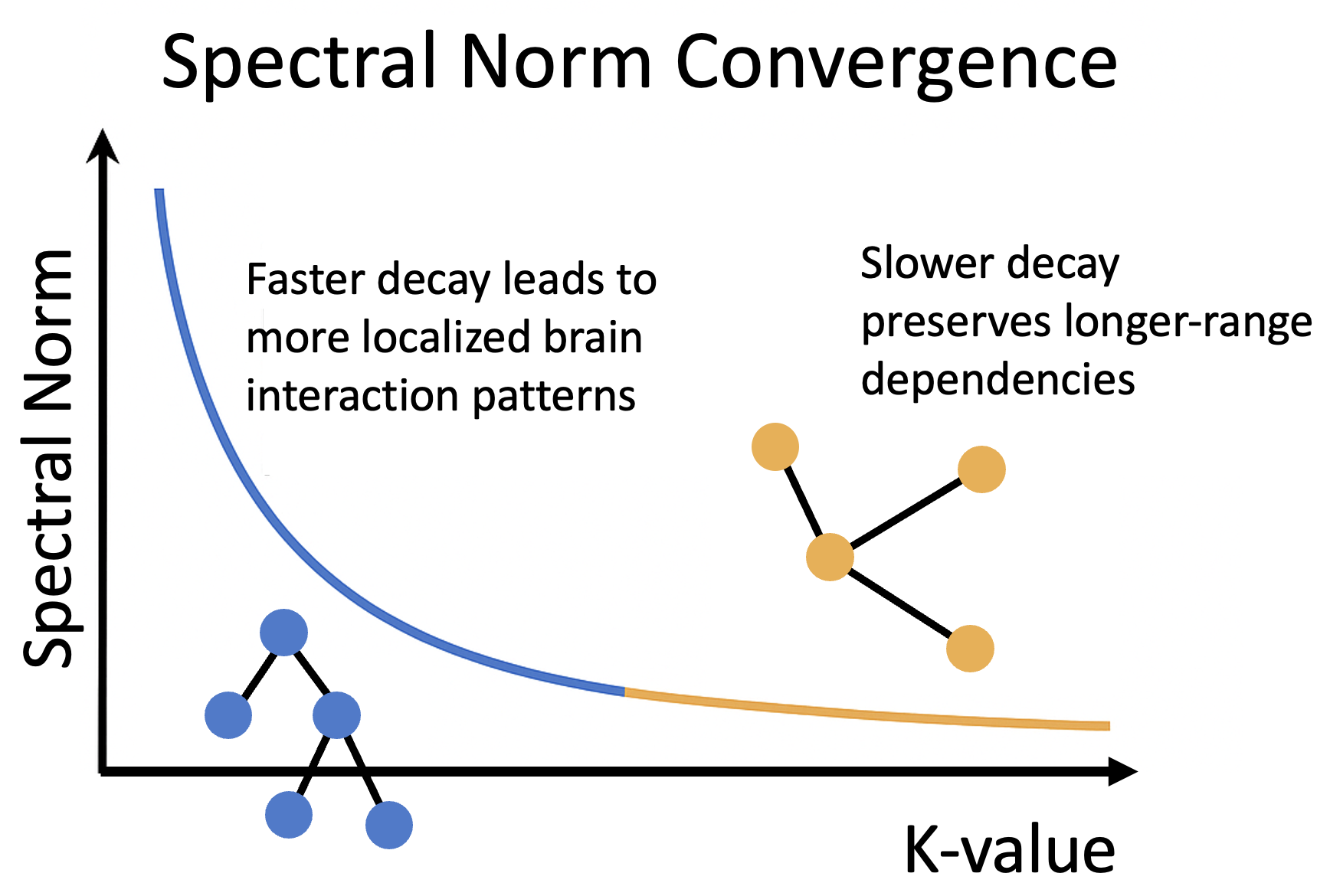}
\caption{\textbf{The visualization of different decay meanings in spectral norm convergence.} The left side of the curve (blue section) demonstrates faster decay, which leads to more localized brain interaction patterns, visualized by a compact, hierarchical network structure with blue nodes. The right side (orange section) shows slower decay, which preserves longer-range dependencies, illustrated by a more distributed network with orange nodes connected across greater distances.}
\label{fig:spectral_norm_convergence}
\end{figure}

\section{Age-Related Brain Network Changes in Addiction and Schizophrenia}

In this section, we identified age-related FC subnetwork changes associated with mental disorders, as shown in Fig. \ref{fig:age_trajactory}. his study reveals distinct patterns of brain network connectivity across different age groups in individuals with addiction and schizophrenia through FC analysis. In addiction patients, age emerges as a critical variable significantly influencing brain network organization. Young addiction patients (18-25 years) demonstrate strong connectivity between the DMN, VN, and FPN, which may reflect the collective influence of these critical networks during the early stages of addiction development. As age increases, middle-aged addiction patients (25-40 years) show connectivity extending to the DAN, indicating enhanced attentional control mechanisms during persistent addiction. Notably, older addiction patients (40-55 years) predominantly exhibit Somatomotor network connectivity, possibly representing specific effects of long-term addiction on motor control systems.
In contrast, schizophrenia patients display different age-related network change trajectories. Young schizophrenia patients (18-25 years) are primarily characterized by strong connections between Somatomotor and VN, which may relate to abnormal sensory-motor integration in early schizophrenia. Middle-aged schizophrenia patients (25-40 years) demonstrate stronger FPN activity, suggesting the importance of higher-order cognitive control networks in disease progression. Older schizophrenia patients (40-66 years) are mainly characterized by enhanced connectivity between DAN and VN, potentially reflecting changes in attentional processing and visual integration mechanisms in late-stage disease. From our research findings, we demonstrate significant heterogeneity across different age groups in various mental disorders, particularly in the progression patterns of age-dependent network changes.

\begin{figure*}
\centering
\includegraphics[width=1\textwidth]{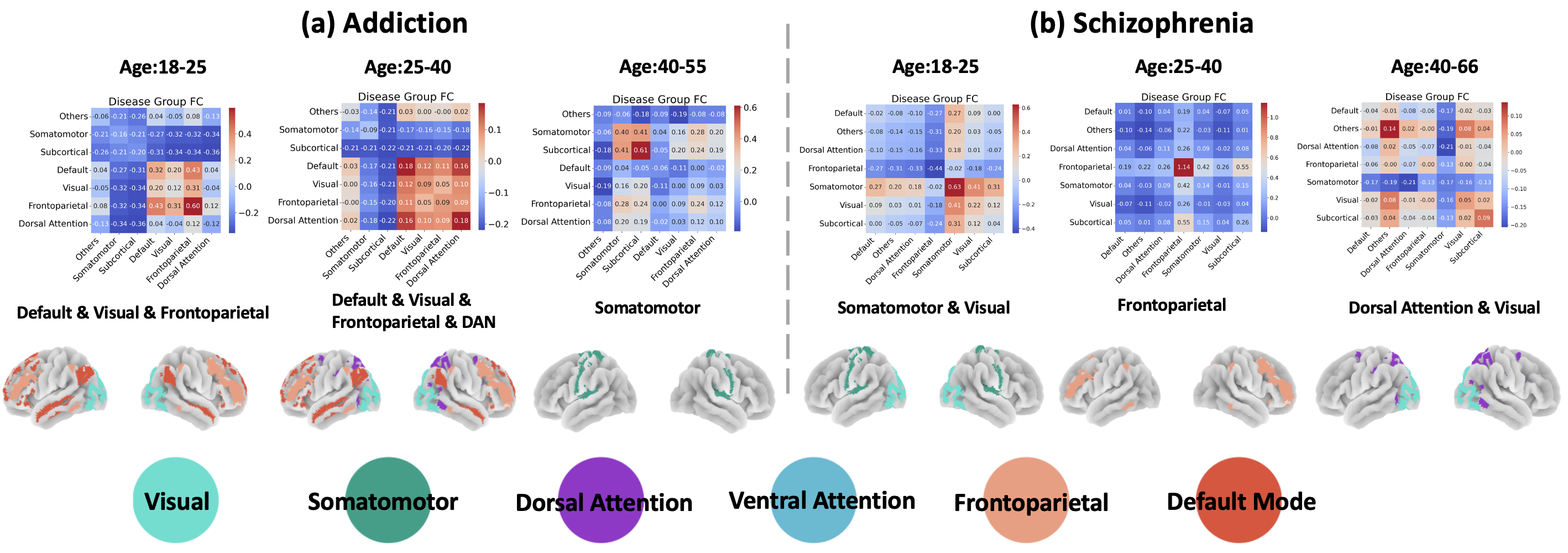}
\caption{Predicted FC changes in different age groups and affiliated functional subnetworks.}
\label{fig:age_trajactory}
\end{figure*}



\newpage
\section{Effects of Age and CMFC loss on Brain Tree Structure}

According to Fig.~\ref{fig:ablation_w_o_brian_tree}, without parameter age and CMFC loss, the tree branches appear disorganized and fragmented, with paths lacking anatomical continuity and interpretability. However, with parameter that dynamically regulates FC based on individual age, it presents more coherent branches and clearer hierarchical structure.

\begin{figure*}
\centering
\includegraphics[width=1\textwidth]{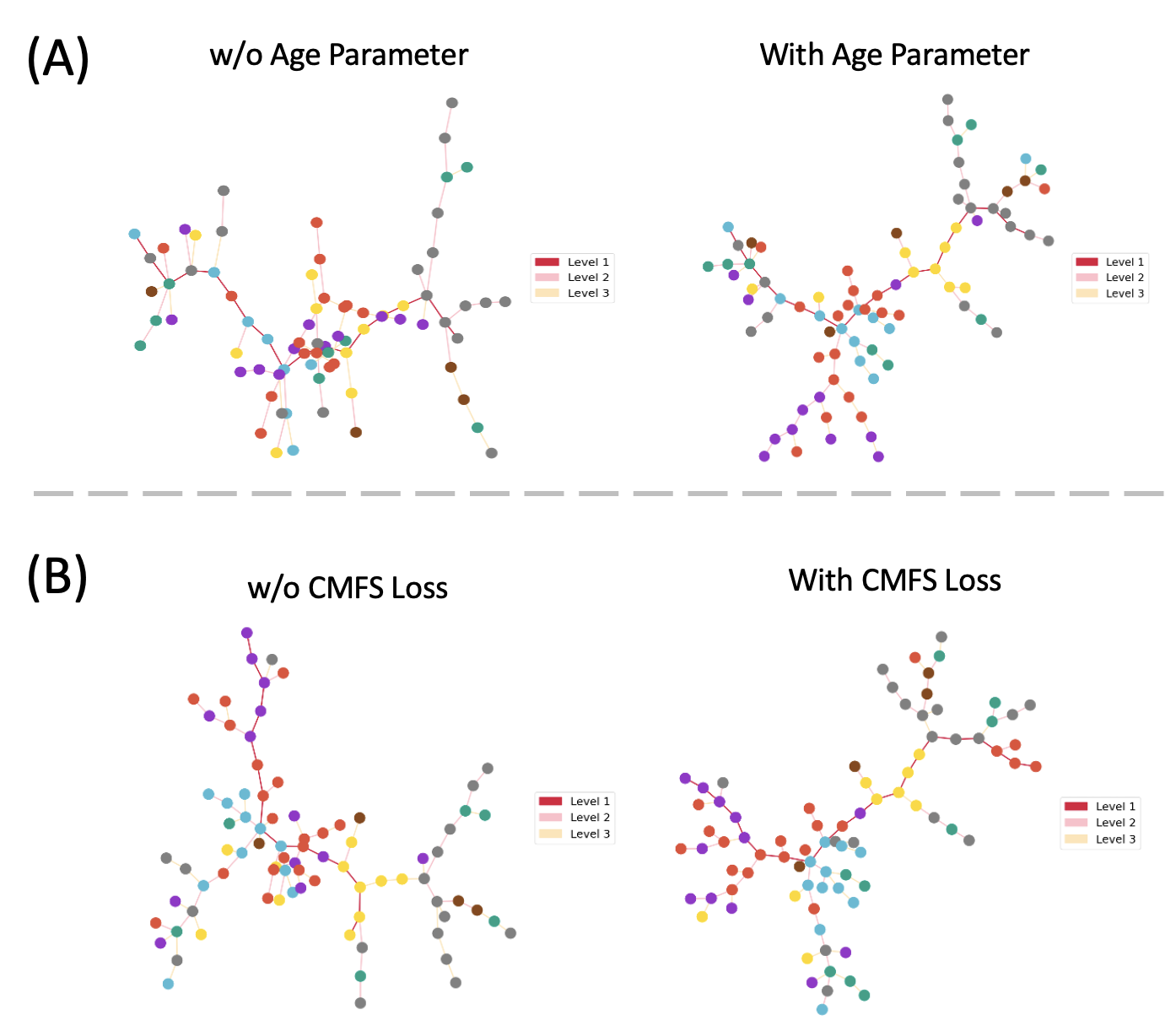}
\caption{\textbf{Constructed brain Tree based on w/o $\theta$
 and w/o CMFC loss.} We present different modeling approaches for brain connectivity networks and their robust in brain tree construct. (A) The left panel illustrates brain connectivity network models without age parameters, while the right panel demonstrates models incorporating age parameters. (B) The left panel displays brain connectivity network models without CMFS loss function, whereas the right panel shows models utilizing CMFC loss function.}
\label{fig:ablation_w_o_brian_tree}
\end{figure*}

\onecolumn
\section{Dataset}
We presents the summary statistic of two public fMRI dataset below:

\begin{table}[H]
\centering
\caption{Summary statistics of demographics in Cannabis and COBRE}
\begin{tabular}{>{\centering\arraybackslash}m{3cm}>{\centering\arraybackslash}m{3cm}>{\centering\arraybackslash}m{4cm}}
\toprule
Dataset & Sample size & \makecell{Age \\ (Mean ± SD)} \\
\midrule
Cannabis & HC (n=128) & 30.06 $\pm$ 10.47 \\
         & Cannabis \textnormal{(n=195)} & 27.23 $\pm$  7.73 \\ 
\midrule
COBRE & HC (n=72) & 38.31 $\pm$  12.21   \\
      & Schizophrenia (n=70) & 36.04  $\pm$ 13.02 \\
\bottomrule
\end{tabular}
\label{table:demographics}
\end{table}

\subsection{fMRI preprocessing in cannabis dataset}\label{appendix:cannabis_preprocessing}

\paragraph{fMRI Data Processing Pipeline}~\cite{kulkarni2023interpretable}
The fMRI data underwent a comprehensive preprocessing pipeline. Raw data was acquired using 3T MRI scanners (Siemens Trio or Philips) with T2*-weighted and high-resolution T1-weighted structural images. The preprocessing, implemented through FMRIPREP, included intensity non-uniformity correction, skull stripping, and brain surface reconstruction using FreeSurfer. Spatial normalization to the ICBM 152 template was performed, followed by brain tissue segmentation into cerebrospinal fluid (CSF), white matter, and gray matter. The functional data underwent slice-timing correction, motion correction, and co-registration to corresponding T1w images using boundary-based registration. Physiological noise was addressed through CompCor regression. Additional denoising steps included removing participants with framewise displacement $>$ 3mm in over 5\% of scan length and applying ArtRepair for remaining high-motion volumes. Combined task/nuisance regression was performed using motion parameters and aCompCor as covariates. The processed whole-brain time series were then parcellated using the Stanford functional ROIs atlas, resulting in 90 regions of interest. Quality control analyses confirmed that motion metrics did not significantly differ between groups (t-test p=0.86) and showed no correlation with classification accuracy (p=0.475).

\newpage
\subsection{Different Atlases According to Seven Affiliated Functional Subnetworks}\label{appendix:atlas}
\begin{table}[h]
\centering
\caption{Comparison of Brain Networks between Stanford (90 ROIs) and Harvard-Oxford (118 ROIs) Atlases}
\begin{tabular}{|p{0.10\textwidth}|p{0.40\textwidth}|p{0.41\textwidth}|}
\hline
\textbf{Network} & \textbf{Stanford (90 ROIs)} & \textbf{Harvard-Oxford (118 ROIs)} \\
\hline
Default Mode & bilateral mPFC, L/R lateral angular gyrus, R superior frontal gyrus, bilateral posterior/middle/mid-posterior cingulate, bilateral anterior thalamus, L/R parahippocampal gyrus, L/R inferior/mid-temporal cortex, L medial angular gyrus, L crus cerebellum (lang), L inferior parietal cortex, L/R middle frontal gyrus (vent\_DMN), R inferior cerebellum (vent\_DMN) & Left/Right Frontal Pole, Superior Frontal Gyrus, Middle Temporal Gyrus (anterior/posterior), Angular Gyrus, Precuneous Cortex \\
\hline
Visual & L/R mid occipital cortex, L/R precuneus (post\_sal), bilateral medial posterior/medial precuneus, R angular gyrus - precuneus, L/R ventral precuneus, L/R fusiform gyrus, L/R mid-occipital cortex & Left/Right Middle/Inferior Temporal Gyrus (temporooccipital), Lateral Occipital Cortex (superior/inferior), Cuneal Cortex, Lingual Gyrus, Temporal Occipital Fusiform Cortex, Occipital Fusiform Gyrus, Occipital Pole \\
\hline
Dorsal Attention & L/R superior frontal gyrus, L/R superior/inferior parietal cortex, L/R precentral/fronto-opercular region, L/R inferior temporal cortex, L/R cerebellum (visuospatial), R crus cerebellum & Left/Right Middle Frontal Gyrus, Superior Parietal Lobule, Supramarginal Gyrus (anterior/posterior) \\
\hline
Frontoparietal & L/R middle frontal/dlPFC, L inferior frontal (triangular), L/R inferior parietal/angular gyrus, L inferior temporal gyrus, R/L crus cerebellum (LECN/RECN), L posterior thalamus, R middle orbito-frontal cortex, R superior medial frontal gyrus, R caudate & Left/Right Inferior Frontal Gyrus (pars triangularis/opercularis), Frontal Operculum Cortex \\
\hline
Somatomotor & L/R superior temporal/auditory, R thalamus, L/R pre/post-central gyri, bilateral supplementary motor area, L/R ventral posterior nucleus of thalamus, cerebellar vermis & Left/Right Precentral Gyrus, Superior Temporal Gyrus (anterior/posterior), Postcentral Gyrus, Juxtapositional Lobule Cortex, Central Opercular Cortex \\
\hline
\hline
Subcortical & R striatum/thalamus (2), L/R inferior frontal gyrus, pons/dropout region & Left/Right Thalamus, Caudate, Putamen, Pallidum, Brain-Stem, Hippocampus, Amygdala, Accumbens \\
\hline
Others & L prefrontal cortex, L/R anterior insula, bilateral ACC, L/R crus cerebellum (ant\_sal), L/R middle frontal gyrus, L/R supramarginal cortex, L/R middle thalamus, L/R anterior cerebellum, L/R superior temporal gyrus, bilateral calcarine cortex, L LGN & Left/Right Insular Cortex, Temporal Pole, Frontal Medial/Orbital Cortex, Subcallosal Cortex, Paracingulate Gyrus, Cingulate Gyrus (anterior/posterior), Parahippocampal Gyrus, Temporal Fusiform Cortex, Parietal Operculum Cortex, Planum regions, Cerebral White Matter/Cortex, Lateral Ventricle \\
\hline
\end{tabular}
\label{tab:network_comparison}
\end{table}

\end{document}